\documentclass{article}

\usepackage{PRIMEarxiv}

\usepackage[utf8]{inputenc} 
\usepackage[T1]{fontenc}    
\usepackage{hyperref}       
\usepackage{url}            
\usepackage{booktabs}       
\usepackage{amsfonts}       
\usepackage{nicefrac}       
\usepackage{microtype}      
\usepackage{lipsum}
\usepackage{fancyhdr}       
\usepackage{graphicx}       
\graphicspath{{media/}}     

\usepackage{soul}
\usepackage{url}
\usepackage{amssymb}
\usepackage[small]{caption}
\usepackage{graphicx}
\usepackage{amsmath}
\usepackage{amsthm}
\usepackage{algpseudocode,algorithm}
\usepackage{tikz}
\urlstyle{same}
\usepackage{enumitem}

\newtheorem{proposition}{Proposition}[section]
\newtheorem{theorem}[proposition]{Theorem}
\newtheorem{cor}[proposition]{Corollary}
\newtheorem{lemma}[proposition]{Lemma}

\theoremstyle{definition}
\newtheorem{example}[proposition]{Example}
\newtheorem{definition}[proposition]{Definition}

\algnewcommand\algorithmicswitch{\textbf{switch}}
\algnewcommand\algorithmiccase{\textbf{case}}
\algnewcommand\algorithmicassert{\texttt{assert}}
\algnewcommand\Assert[1]{\State \algorithmicassert(#1)}%
\algdef{SE}[SWITCH]{Switch}{EndSwitch}[1]{\algorithmicswitch\ #1\ \algorithmicdo}{\algorithmicend\ \algorithmicswitch}%
\algdef{SE}[CASE]{Case}{EndCase}[1]{\algorithmiccase\ #1:}{\algorithmicend\ \algorithmiccase}%
\algtext*{EndSwitch}%
\algtext*{EndCase}

\newcommand{\AF}{\mathcal{F}}
\newcommand{\A}{\mathcal{A}}
\newcommand{\Adj}{\mathbb{A}}

\newcommand{\D}{\mathcal{D}}
\newcommand{\Att}{\mathtt{Att}}

\def\diag{\operatorname{diag}}
\def\RR{\mathbb{R}}
\newcommand{\xto}[1]{\xrightarrow{#1}}
\newcommand{\OP}[1]{\operatorname{#1}}

\pagestyle{fancy}
\thispagestyle{empty}
\rhead{ \textit{ }} 

\fancyhead[LO]{Analytical Solutions for the Inverse Problem within Gradual Semantics}

\title{Analytical Solutions for the Inverse Problem within Gradual Semantics
}

\author{
  Nir Oren, Bruno Yun, Assaf Libman, Murilo S. Baptista\\
  University of Aberdeen \\
  Scotland \\
  \texttt{\{n.oren, bruno.yun, a.libman, murilo.baptista\}@abdn.ac.uk} \\
   \And
  Srdjan Vesic \\
  CRIL, Univ. Artois \\
  France\\
  \texttt{vesic@cril.fr} \\
}

\begin{document}
\maketitle

\begin{abstract}
Gradual semantics within abstract argumentation associate a numeric score with every argument in a system, which represents the level of acceptability of this argument, and from which a preference ordering over arguments can be derived. While some semantics operate over standard argumentation frameworks, many utilise a weighted framework, where a numeric initial weight is associated with each argument. Recent work has examined the inverse problem within gradual semantics. Rather than determining a preference ordering given an argumentation framework and a semantics, the inverse problem takes an argumentation framework, a gradual semantics, and a preference ordering as inputs, and identifies what weights are needed to over arguments in the framework to obtain the desired preference ordering. Existing work has attacked the inverse problem numerically, using a root finding algorithm (the bisection method) to identify appropriate initial weights. In this paper we demonstrate that for a class of gradual semantics, an analytical approach can be used to solve the inverse problem. Unlike the current state-of-the-art, such an analytic approach can rapidly find a solution, and is guaranteed to do so. In obtaining this result, we are able to prove several important properties which previous work had posed as conjectures.
\end{abstract}

\section{Introduction}

Standard approaches to abstract argumentation consider a set of atomic arguments and the interactions between them, encoding these as a graph. They then identify which sets of arguments are justified together by considering the inter-argument interactions \cite{baroni_introduction_2011}. Within the argumentation community, there has been increasing interest in so-called \emph{ranking-based} semantics. These latter semantics aim to identify a ranking over the arguments, with higher ranked arguments considered more justified (i.e. ``less attacked'') than arguments ranked lower (i.e. ``more attacked''). Many ranking-based semantics associate  numerical \emph{acceptability degrees} to all arguments within the system, with the final ranking depending on the associated numerical ordering. Furthermore, some ranking semantics compute the final acceptability degree of an argument based not only on the topology of the argumentation graph, but also based on some \emph{initial weight} assigned to the argument.

Our focus in this paper is on the \emph{inverse problem} found in ranking-based semantics. That is, rather than describing how an ordering over arguments can be obtained from some set of arguments and their associated properties, we ask \emph{what properties must be associated with arguments so as to derive some desired final acceptability ordering}. This problem was previously tackled by \cite{oren2022inverse}, who employed a numerical approach to solve it. In their work, the authors focused on three specific semantics, which also serve as the focus of this paper. Our core contribution involves describing an analytic approach to solving the inverse problem for these three semantics.
The advantages of pursuing such an analytic approach are twofold. First, it guarantees our ability to find a (unique) solution; and second, it is  efficient in finding this solution.

Apart from describing an analytic approach to the inverse problem, we make advance the state-of-the-art in several ways. First, building on ideas from \cite{Pu14,AMGOUD2022103607,yun20ranking} we describe a very general class of functions, which can be used to underpin gradual argumentation semantics, and demonstrate the existence of a unique fixed-point for such functions. We also demonstrate continuity for this class of functions. Focusing on the h-categorizer gradual semantics \cite{amgoud_acceptability_2017}, we prove that this semantics obeys monotonicity, a conjecture made in \cite{oren2022inverse} which is required for their numerical approach to operate. With these results, we are able to prove several other important properties hitherto not discussed in the literature.

The remainder of this paper is structured as follows. In the Section \ref{sec:background} we introduce the semantics for which we address the inverse problem. In Section \ref{sec:kernel}, we consider a generalisation of several gradual semantics, demonstrating that all semantics in this generalisation have a unique fixed point and are continuous. In Section \ref{sec:problem} we expand on the approach advanced in \cite{oren2022inverse}, demonstrating how it can applied to the more generalised semantics. Following this, Section \ref{sec:analitic-approach} details our analytic approach and proves some important properties. Section \ref{sec:conclusion} concludes and discusses potential future work.

\section{Background}\label{sec:background}

We begin by providing an overview of the three gradual semantics around which our work revolves. These semantics all operate over \emph{weighted argumentation frameworks} \cite{amgoud_acceptability_2017,coste-marquis_selecting_2012}.

\begin{definition}[WAF]
A weighted argumentation framework (WAF) is a triple $\AF=\langle \A,\D,w \rangle$, where $\A$ is a finite set of arguments, $\D \subseteq \A \times \A$ is a binary attack relation, and $w: \A \to [0,1]$ is a weighting function assigning an initial weight to each argument.
\end{definition}

We denote the set of attackers of an argument $a \in \A$ as $\Att(a)=\{b \in \A \mid (b,a) \in \D\}$.
While myriad ranking based-semantics have been described (see e.g., \cite{bonzon_comparative_2016}), we concentrate on three of the semantics described in \cite{AMGOUD2022103607} which allow for an initial weight to be assigned to an argument. This is because the inverse problem we focus on considers what initial weight needs to be assigned to obtain some preference ordering.

\begin{definition}[Gradual Semantics]
\label{def:grad_sem}
A gradual semantics $\sigma$ is a function that associates to each weighted argumentation graph $\AF=\langle \A,\D,w \rangle$, a scoring function $\sigma^\AF : \A \to [0,1]$ that provides an acceptability degree to each argument.
We consider the three semantics $\sigma_x$, for $x \in \{MB, CB, HC\}$, defined as follows.
\begin{itemize}
    
    \item The \emph{weighted max-based} semantics $\sigma_{MB}$ \cite{AMGOUD2022103607} is defined such that the acceptability degree of an argument $a\in \A$ is $\sigma_{MB}^\AF(a) = MB_\infty(a)$, where $MB_i(a)=\frac{w(a)}{1+\max\limits_{b \in \Att(a)} MB_{i-1}(b)}$ and for all $b \in \A, MB_0(b) = w(b)$
    .

    \item The \emph{weighted card-based} semantics $\sigma_{CB}$ \cite{AMGOUD2022103607} is defined such that the acceptability degree of an argument $a\in \A$ is $\sigma^\AF_{CB}(a) = CB_\infty(a)$ where
    $CB_i(a)=\frac{w(a)}{1+|\Att^*(a)|+ \frac{\sum\limits_{b \in \Att^*(a)} CB_{i-1}(b)}{|\Att^*(a)|}}$, for all $b \in \A, CB_0(b) = w(b)$, and $\Att^*(a) = \{ b \in \Att(a) \mid w(b)>0\}$ if $\Att^*(a) \neq \emptyset$ and $w(a)$ otherwise.
    
    \item The \emph{weighted h-categorizer} semantics $\sigma_{HC} $\cite{AMGOUD2022103607} is defined such that the acceptability degree of an argument $a \in \A$ is $\sigma_{HC}^\AF(a) = HC_\infty(a)$ where
    $HC_i(a)=\frac{w(a)}{1+\sum\limits_{b \in \Att(a)} HC_{i-1}(b)}$ and for all $b \in \A, HC_0(b) = w(b)$.    
    
\end{itemize}
\end{definition}

\begin{example} (taken from \cite{oren2022inverse}).  \label{ex:semantics}
Let $\AF = \langle \A, \D, w \rangle$ be a WAF, where $\A = \{ a_0, a_1, a_2, a_3 \}, \D = \{ (a_0, a_2),(a_1, a_1),(a_1, a_2),(a_2, a_2),$ $(a_3, a_2) \}$, $w(a_0) = 0.43, w(a_1) = 0.39, w(a_2) = 0.92$, and $w(a_3) = 0.3$. The WAF is represented in Figure \ref{fig:ex1} whereas the acceptability degrees and the associated rankings on arguments for the semantics of Definition \ref{def:grad_sem} are shown in Table \ref{tab:ex1}.

Note that we use the following notation for the ordering on arguments, where $a \trianglerighteq b$ denotes that $a$ is at least as preferred as $b$, $a \simeq b$ iff $ a \trianglerighteq b \wedge b \trianglerighteq a$, $a \triangleright b$ iff $a \trianglerighteq b \wedge a \not\trianglelefteq b$, and $a \trianglelefteq b$ iff $a \not \triangleright\ b$.

\begin{table}[t]
    \centering
    \renewcommand{\arraystretch}{1.3}
    \begin{tabular}{|c|c|c|c|c|c|}
    \hline
         & $a_0$& $a_1$& $a_2$& $a_3$& Argument ranking \\
         \hline
         $\sigma^\AF_{MB}$& 0.43 & 0.30 & 0.58 & 0.30 & $a_1 \simeq a_3 \triangleleft a_0 \triangleleft a_2$\\
         \hline
         $\sigma^\AF_{HC}$& 0.43 & 0.30 & 0.38 & 0.30 & $ a_1 \simeq a_3 \triangleleft a_2 \triangleleft a_0$\\
         \hline
         $\sigma^\AF_{CB}$& 0.43 & 0.18 & 0.17 & 0.30 & $ a_2 \triangleleft a_1 \triangleleft a_3 \triangleleft a_0$\\
         \hline
    \end{tabular}
    \caption{Acceptability degrees of the arguments from Figure \ref{fig:ex1}}
    \label{tab:ex1}
\end{table}

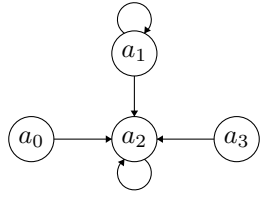
\begin{figure}
\centering
\begin{tikzpicture}[scale=0.1]
\tikzstyle{every node}+=[inner sep=0pt]
\draw [black] (18.5,-39.5) circle (3);
\draw (18.5,-39.5) node {$a_0$};
\draw [black] (32.2,-28.1) circle (3);
\draw (32.2,-28.1) node {$a_1$};
\draw [black] (32.2,-39.5) circle (3);
\draw (32.2,-39.5) node {$a_2$};
\draw [black] (45.8,-39.5) circle (3);
\draw (45.8,-39.5) node {$a_3$};

\draw [black] (21.5,-39.5) -- (29.2,-39.5);
\fill [black] (29.2,-39.5) -- (28.4,-39) -- (28.4,-40);
\draw [black] (30.877,-25.42) arc (234:-54:2.25);
\fill [black] (33.52,-25.42) -- (34.4,-25.07) -- (33.59,-24.48);
\draw [black] (32.2,-31.1) -- (32.2,-36.5);
\fill [black] (32.2,-36.5) -- (32.7,-35.7) -- (31.7,-35.7);
\draw [black] (33.523,-42.18) arc (54:-234:2.25);
\fill [black] (30.88,-42.18) -- (30,-42.53) -- (30.81,-43.12);
\draw [black] (42.8,-39.5) -- (35.2,-39.5);
\fill [black] (35.2,-39.5) -- (36,-40) -- (36,-39);
\end{tikzpicture}
\caption{Graphical representation of a WAF}
\label{fig:ex1}
\end{figure}

\end{example}

\section{Kernel-based Gradual Semantics} \label{sec:kernel}

Pu \cite{Pu14} demonstrated that a unique fixed-point exists for the h-categorizer semantics, i.e.\ in the case where all arguments have an initial weight of 1. This result was generalised to the three semantics described in Definition \ref{def:grad_sem} by Amgoud et al. \cite{AMGOUD2022103607}. In this section, we describe a more general class of semantics for which such a unique fixed-point is guaranteed to exist. Our core result is as follows.

\begin{theorem}
\label{th:fix_point}
We say that $\varphi_i(x):[0,1]^n \to [0, \infty)$, where $i=1, \dots,n$, is an argumentation kernel function if $\varphi_i$:
\begin{itemize}
    \item is continuous
    \item is monotonic, i.e., for all $x,y \in [0,1]^n$ where $x \preceq y$\footnote{We equip $\RR^n$ with the partial order $x \preceq y$ if $x_i \leq y_i$ for all $i=1, \dots,n$. We will also write $x \preceq 0$ to denote $x_i \leq 0$ for all $i$.}, $\varphi_i(x) \leq \varphi_i(y)$
    \item is homogeneous, i.e., $\varphi_i(tx) = t \varphi(x)$ for any $x \in p0,1]^n$ and every $0 \leq t \leq 1$
\end{itemize}

If $\Phi_w(x)_i=\frac{w_i}{1+\varphi_i(x)}$, then $\Phi_w:[0,1]^n \to [0,1]^n$ has a unique fixed point.
\end{theorem}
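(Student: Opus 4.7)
My plan is to establish existence via a monotone-iteration construction (avoiding Brouwer) and then leverage the \emph{homogeneity} assumption, which is the crucial non-standard ingredient, to force uniqueness. First I would verify that $\Phi_w$ is a continuous self-map of $[0,1]^n$: continuity follows from continuity of the $\varphi_i$, while $\Phi_w(x)_i = w_i/(1+\varphi_i(x)) \in [0, w_i] \subseteq [0,1]$ since $\varphi_i \geq 0$ and $w_i \in [0,1]$. I would also record the qualitative observation that $\Phi_w$ is \emph{order-reversing}: if $x \preceq y$ then $\varphi_i(x) \leq \varphi_i(y)$ by monotonicity, hence $\Phi_w(x) \succeq \Phi_w(y)$.

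Next I would introduce the Picard iterates $x^{(0)} = 0$ and $x^{(k+1)} = \Phi_w(x^{(k)})$. Because $\Phi_w$ is order-reversing, a standard alternation argument shows that $(x^{(2k)})_k$ is nondecreasing, $(x^{(2k+1)})_k$ is nonincreasing, and any fixed point $x$ must satisfy $x^{(2k)} \preceq x \preceq x^{(2k+1)}$ for every $k$. Monotone convergence in each coordinate of $[0,1]$ then produces limits $x^\star$ and $y^\star$, and continuity of $\Phi_w$ passes through the limit to yield $\Phi_w(x^\star) = y^\star$ and $\Phi_w(y^\star) = x^\star$, with $x^\star \preceq y^\star$ and every fixed point sandwiched between them. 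Thus both existence and uniqueness of a fixed point reduce to showing $x^\star = y^\star$.

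The main obstacle, and the step where homogeneity enters decisively, is proving $x^\star = y^\star$. Coordinates $i$ with $w_i = 0$ are forced to vanish at every fixed point and may be discarded, so I may assume $x^\star_i, y^\star_i > 0$ throughout. Set $\lambda = \min_i (x^\star_i / y^\star_i)$, which lies in $(0,1]$ since $x^\star \preceq y^\star$. Aiming for a contradiction, suppose $\lambda < 1$ and let $i^\star$ attain the minimum. By construction $x^\star \succeq \lambda y^\star$, so monotonicity combined with homogeneity at scale $\lambda \in [0,1]$ gives
\[
\varphi_{i^\star}(x^\star) \;\geq\; \varphi_{i^\star}(\lambda y^\star) \;=\; \lambda\,\varphi_{i^\star}(y^\star).
\]
On the other hand, the coupled equations $x^\star = \Phi_w(y^\star)$ and $y^\star = \Phi_w(x^\star)$ give, at coordinate $i^\star$, the identity $\lambda(1 + \varphi_{i^\star}(y^\star)) = 1 + \varphi_{i^\star}(x^\star)$. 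Substituting the preceding inequality collapses this to $\lambda \geq 1$, contradicting $\lambda < 1$. Hence $x^\star = y^\star$, which I expect to be the entire crux of the argument; everything else is bookkeeping.
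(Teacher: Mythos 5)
Your proposal is correct, and it shares the paper's skeleton but resolves the crux by a genuinely different mechanism. The paper proves the theorem in two stages: an abstract fixed-point theorem for order-reversing maps $f$ satisfying $f(tx) \preceq \frac{1}{t+\alpha(1-t)}f(x)$ for some $\alpha \in (0,1]$, followed by a verification that $\Phi_w$ satisfies this condition with $\alpha = \min_i\min_{x}(1+\varphi_i(x))^{-1}$ (obtained from homogeneity plus compactness of $[0,1]^n$). Both you and the paper run the alternating Picard iteration from $0$, obtaining an increasing even subsequence and a decreasing odd subsequence that sandwich every fixed point; the difference is how the two limits are shown to coincide. The paper works along the sequence, tracking $\pi_k = \sup\{t : t\, u^{(2k-1)} \preceq u^{(2k)}\}$ and deriving $1-\pi_{k+1} \leq (1-\alpha)(1-\pi_k)$, so $\pi_k \to 1$ geometrically. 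You instead pass directly to the limit $2$-cycle $(x^\star,y^\star)$ and exploit the explicit reciprocal form of $\Phi_w$: at a coordinate attaining $\lambda=\min_i x^\star_i/y^\star_i$ the cycle equations give $\lambda(1+\varphi_{i^\star}(y^\star)) = 1+\varphi_{i^\star}(x^\star) \geq 1+\lambda\varphi_{i^\star}(y^\star)$, forcing $\lambda \geq 1$. Your handling of the degenerate coordinates is sound (if $w_i=0$ both limits vanish there, so $x^\star \succeq \lambda y^\star$ still holds with $\lambda$ taken over the positive coordinates, and the minimizing coordinate necessarily has $w_{i^\star}>0$). What each approach buys: the paper's route yields a reusable standalone theorem (a generalisation of Pu's result) that applies to any map satisfying the $\alpha$-condition and additionally gives convergence of Picard iterates from an arbitrary starting point with a geometric rate; your route is shorter and avoids the compactness argument needed to produce $\alpha$, but is tied to the specific form $w_i/(1+\varphi_i(x))$ and, as stated, establishes only existence and uniqueness --- which is all the theorem actually claims.
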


We can consider the class of all gradual semantics $\sigma$ that takes as input a WAF $\AF = \langle \{ a_1, a_2, \dots, a_n\}, \D, w \rangle$ and return a scoring function $\sigma^\AF$ such that for all $x \in [0,1]^n$, for all $a_i \in \A$, there exists $\Phi_w$ (as defined in Theorem \ref{th:fix_point}) such that $\Phi_w(x)_i = \sigma^\AF(a_i)$.
The previous theorem states that all the gradual semantics in the aforementioned class are guaranteed to ``converge'', including the previous three semantics of Definition \ref{def:grad_sem}.
Indeed, they are all in this class with $\varphi_i(x) = \max_{a_j \in \Att(a_i)} x_j$, 
$\varphi_i(x) = 1+|\Att^*(a_i)|+ (\sum_{a_j \in \Att^*(a_i)} x_j)/ |\Att^*(a_i)|$, and $\varphi_i(x) = \sum_{a_j \in \Att(a_i)} x_j$ respectively for $\sigma_{MB}, \sigma_{CB},$ and $\sigma_{HC}$. 
Of course, our result is more general and one could consider many other semantics in this class based on other argumentation kernel functions. 
Other examples of argumentation kernels include the geometric mean, i.e.\ $\varphi_i(x) = \sqrt[n]{b_i x_1 \cdots x_n}$ for some $b_i >0$, or the $L^p$-norm\footnote{We will also use the notation $\|x\|_p$ to denote the $L^p$ norm of $x$.
For $p=\infty$ we obtain the max-norm, i.e.\ 
$\|x\|_\infty = \max_i |x_i|.$}, i.e.\ $\varphi_i(x) = \sqrt[p]{\sum_j |x_j|^p}$ for some $1 \leq p  < \infty$, among others.
More generally, we can combine argumentation kernels together as the collection of argumentation kernel functions is: 
\begin{enumerate}
\item
\textit{closed under linear combination with non-negative coefficients}, i.e.\ if $\varphi_1,\dots,\varphi_k$ are argumentation kernels and $\lambda_1,\dots,\lambda_k \geq 0$ then $\lambda_1 \varphi_1 + \dots \lambda_k \varphi_k$ is an argumentation kernel.

\item
\textit{closed under geometric}, i.e.\ if $\varphi_1,\dots,\varphi_k$ are argumentation kernels then $\sqrt[k]{\varphi_i(x)\cdots \varphi_k(x)}$ is an argumentation kernel.

\item
\textit{closed under limits}, i.e.\ if $\varphi_k$ is a sequence of argumentation kernels such that the limit $\varphi :=\lim_k \varphi_k$ exists and continuous (e.g the convergence is uniform), then $\varphi$ is an argumentation kernel.
\end{enumerate}

The following theorem builds on the following result, adapted from \cite{Pu14}.

\begin{theorem}\label{th:pu-generalisation}
Let $X = [0,1]^n$, and let $f:X \to X$ be a function which
\begin{itemize}
    \item is order reversing, i.e., $x \preceq y$ implies $f(y) \preceq f(x)$
    \item there is some $0 < \alpha \leq 1$ such that for any $x \in [0,1]^n$ and any $0 \leq t \leq 1$
    $$f(tx) \preceq \frac{1}{t+\alpha(1-t)} f(x)$$
\end{itemize}
then $f$ has a unique fixed point $y \in [0,1]^n$. Furthermore, $y = \lim_{k \to \infty} x^{(k)}$ where $x^{(k)}$ is any sequence defined recursively by choosing $x^{(0)} \in X$ arbitrarily and $x^{(k+1)}=f(x^{(k)})$.
\end{theorem}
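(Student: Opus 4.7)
The plan is to mimic the sandwich argument of \cite{Pu14} from the $\alpha=1$ case by running two coupled iterations from the extremes of the cube and using the scaling hypothesis to force the sandwich to tighten geometrically. Concretely, set $x_k = f^k(\mathbf{1})$ and $y_k = f^k(\mathbf{0})$. Because $f$ is order reversing, $f^2$ is order preserving, so from $y_0 \preceq x_0$ one deduces by induction that $y_{2k} \preceq x_{2k}$, that $x_{2k+1} \preceq y_{2k+1}$, that the even $y$-subsequence and odd $x$-subsequence are monotone non-decreasing, and that the even $x$-subsequence and odd $y$-subsequence are monotone non-increasing. Boundedness in $X=[0,1]^n$ then gives componentwise convergence of each of the four subsequences.

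The quantitative ingredient is a scalar recursion. Write $c(t) = t + \alpha(1-t) = \alpha + (1-\alpha)t$. I will show by induction that there exist $t_k \in [0,1]$ with $t_0 = 0$ and $t_{k+1}=c(t_k)$ such that $t_k\, x_k \preceq y_k \preceq x_k$ for even $k$, with the roles of $x_k$ and $y_k$ swapped for odd $k$. Granted $t_k\, x_k \preceq y_k \preceq x_k$, order reversal together with the scaling hypothesis applied at $t_k x_k$ gives $f(y_k) \preceq f(t_k x_k) \preceq \frac{1}{c(t_k)} f(x_k)$, i.e. $c(t_k)\, y_{k+1} \preceq x_{k+1}$; and $x_{k+1} \preceq y_{k+1}$ again follows from order reversal. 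Solving the linear recursion yields $t_k = 1 - (1-\alpha)^k \to 1$, so $\|x_k - y_k\|_\infty \leq 1 - t_k \to 0$. Hence the even- and odd-indexed limits $L := \lim y_{2k} = \lim x_{2k}$ and $L' := \lim y_{2k+1} = \lim x_{2k+1}$ both exist, and the oscillation pattern gives $L \preceq L'$.

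The heart of the argument is collapsing $L$ and $L'$ into a single point without invoking continuity of $f$. From the sandwich $y_{2k} \preceq L \preceq x_{2k}$ and order reversal one gets $x_{2k+1} \preceq f(L) \preceq y_{2k+1}$, so letting $k \to \infty$ forces $f(L) = L'$, and symmetrically $f(L') = L$. Now set $s = \sup\{t \in [0,1] : tL' \preceq L\}$; this supremum is attained by componentwise closedness and lies in $[0,1]$. Applying order reversal and the scaling hypothesis to $sL' \preceq L$ gives
\[
L' = f(L) \preceq f(sL') \preceq \frac{1}{c(s)}\, f(L') = \frac{1}{c(s)}\, L,
\]
whence $c(s)\, L' \preceq L$, so $c(s) \leq s$ by maximality of $s$. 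Since $c(s) - s = \alpha(1-s)$ and $\alpha > 0$, this forces $s = 1$, hence $L' \preceq L$ and together with $L \preceq L'$ we obtain $L = L'$. The only corner case is when some coordinate of $L'$ vanishes: then $L \preceq L'$ forces the corresponding coordinate of $L$ to vanish, and $s$ is computed using only the strictly positive coordinates; if $L'=0$ then $L = L' = 0$ is already immediate. The common value is then a fixed point of $f$.

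The remaining items are straightforward consequences. Uniqueness follows from the same sup-argument applied to any two fixed points $z, z'$: take $s = \sup\{t \in [0,1] : tz \preceq z' \text{ and } tz' \preceq z\}$ and derive $s = 1$. For convergence from an arbitrary start $x^{(0)} \in X$, induct on $k$ to show $y_k \preceq x^{(k)} \preceq x_k$ for even $k$ (and the reverse for odd $k$) using $\mathbf{0} \preceq x^{(0)} \preceq \mathbf{1}$ and order reversal, so the same squeeze forces $x^{(k)} \to L$. The main conceptual obstacle is recognising that the scaling hypothesis is precisely what converts the usual Tarski-style oscillation into a geometric contraction of the scalar $t_k$, and then distilling a single self-improving supremum argument that collapses the even and odd limits.
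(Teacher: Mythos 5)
Your proof is correct, and it is built from the same engine as the paper's --- the oscillating iteration of an order-reversing map together with the scalar contraction $t \mapsto t+\alpha(1-t)$ --- but you deploy that engine differently in three places, and the differences are worth recording. First, you run two coupled orbits $f^k(\mathbf 0)$ and $f^k(\mathbf 1)$ and apply the scalar recursion to the same-index pair $(x_k,y_k)$, obtaining $\|x_k-y_k\|_\infty \le (1-\alpha)^k$; the paper runs the single orbit $u^{(k)}=f^k(\mathbf 0)$ and applies the recursion to the consecutive pair $(u^{(2k)},u^{(2k-1)})$, which collapses the even and odd limits in one stroke. Your version therefore still has two candidate limits $L,L'$ left over, and you need the extra step $f(L)=L'$, $f(L')=L$ (which your two-sided squeeze $x_{2k+1}\preceq f(L)\preceq y_{2k+1}$ delivers without any continuity assumption --- the single-orbit squeeze would only give $u^{(\mathrm{ev})}\preceq f(u^{(\mathrm{ev})})\preceq u^{(\mathrm{odd})}$, so the two orbits are genuinely earning their keep here) followed by the self-improving supremum $s=\sup\{t: tL'\preceq L\}$ with $c(s)\le s \Rightarrow s=1$. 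Second, your uniqueness argument applies that same supremum trick directly to a pair of fixed points, whereas the paper derives uniqueness from the nested-image identity $\bigcap_n f^n(X)=\{u^*\}$; your route is more self-contained and arguably cleaner, the paper's gives the additional structural information about $\bigcap_n f^n(X)$. Third, convergence from an arbitrary start is handled identically (sandwiching $x^{(k)}$ between the two extreme orbits). The only cosmetic remark: the ``corner case'' you flag for vanishing coordinates of $L'$ is not actually a case split, since $\{t\in[0,1]: tL'\preceq L\}$ is closed and nonempty regardless and the constraint is vacuous in any coordinate where $L'_i=0$; the argument goes through verbatim.
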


\begin{proof}
Define a sequence $u^{(n)}$ in $X$ by recursion, $u^{(0)}=0$ and $u^{(n+1)}=f(u^{(n)})$ for every $n \geq 0$.
Then $u^{(0)} \preceq u^{(1)}$ since $0 \in X$ is the minimum of $X$, and since $f$ satisfies the second item of Theorem \ref{th:pu-generalisation}, we get $u^{(0)} \preceq u^{(2)} \preceq u^{(1)}$.
By induction one easily shows
\begin{eqnarray}
\label{E:fixed points general:chains}
\nonumber & \text{(i)} & u^{(2k)} \preceq u^{(2k-1)} \text{ for all $k \geq 1$}, \\
\nonumber
& \text{(ii)} & u^{2k)} \preceq u^{(2k+2)}  \text{ for all $k \geq 0$}, \\
\nonumber
& \text{(iii)} & u^{2k+1)} \preceq u^{(2k-1)}  \text{ for all $k \geq 1$}.
\end{eqnarray}
Thus, the sequence $u^{(2k)}$ is increasing (in the sense that $u^{(2k)}_i$ is an increasing sequence in $\RR$ for every $i=1,\dots,n$) and bounded above by $1 \in X$, and $u^{(2k-1)}$ is decreasing and bounded below by $0 \in X$, so we define
\begin{align*}
    u^{(\OP{ev})} &= \sup_k u^{(2k)} = \lim_k u^{(2k)} \\ 
    u^{(\OP{odd})} &= \inf_k u^{(2k-1)} = \lim_k u^{(2k-1)}.
\end{align*}

It follows from (i) that $u^{(\OP{ev})} \preceq u^{(\OP{odd})}$.
Our next goal is to show that $u^{(\OP{ev})} = u^{(\OP{odd})}$.

For every $k \geq 1$ set
\[
\pi_k = \sup \, \{ t : t \cdot u^{(2k-1)} \preceq u^{(2k)}, \, 0 \leq t \leq 1\}.
\]
The set on the right is not empty since $0 \cdot u^{(2k-1)}=0 \preceq u^{(2k)}$, so
\[
0 \leq \pi_k \leq 1
\]
and by construction
\[
\pi_k \cdot u^{(2k-1)} \preceq u^{(2k)}.
\]
Since $f$ is order reversing,(ii) implies that for every $k \geq 1$

\begin{align*}
u^{(2k+1)} &= f(u^{(2k)}) \preceq \tfrac{1}{\pi_k+\alpha(1-\pi_k)} \cdot f(u^{(2k-1)}) \\
&= \tfrac{1}{\pi_k+\alpha(1-\pi_k)} \cdot u^{(2k)} 
\preceq \tfrac{1}{\pi_k+\alpha(1-\pi_k)} \cdot u^{(2k+2)}.
\end{align*}

Hence $(\pi_k+\alpha(1-\pi_k)) \cdot u^{(2k+1)} \preceq u^{(2k+2)}$, so by the definition of $\pi_{k+1}$
\[
\pi_k+\alpha(1-\pi_k) \leq \pi_{k+1}.
\]
It follows that $1-\pi_{k+1} \leq (1-\alpha)(1-\pi_k)$ for all $k \geq 1$  and therefore 
\[
1-\pi_{k+1} \leq (1-\pi_1) (1-\alpha)^k \xto{ \ k \to \infty \ } 0.
\]
But $\pi_k \leq 1$ for all $k$, so $\lim_k \pi_k =1$.

Consider some $\epsilon>0$.
Then $\pi_k >1-\epsilon$ for all $k \gg 0$.
Equation (i) implies
\[
(1-\epsilon) \cdot u^{(2k-1)} \preceq \pi_k \cdot u^{(2k-1)} \preceq  u^{(2k)} \preceq u^{(2k-1)}.
\]
Letting $k \to \infty$ this implies $(1-\epsilon) u^{(\OP{odd})} \preceq u^{(\OP{ev})} \preceq u^{(\OP{odd)}}$.
Since $\epsilon>0$ was arbitrary,
\[
u^{(\OP{ev)}}=u^{(\OP{odd})}
\]
as needed.
We will denote $u^*:=u^{(\OP{ev)}}=u^{(\OP{odd})}$.

Write $f^n \colon X \to X$ for the $n$-fold composition of $f$ with itself: $f^n=f \circ \cdots \circ f$. 
Since $0 \in X$ is minimal, $u^{(0)} \preceq x$ for any $x \in X$.
Since $f$ is order reversing, $u^{(0)} \preceq f(x) \preceq u^{(1)}$.
It then follows by induction that for all $k \geq 1$
\begin{eqnarray*}
&& f^{2k-1}(X) \subseteq [u^{(2k-2)},u^{(2k-1)}]_X \\
&& f^{2k}(X) \subseteq [u^{(2k)},u^{(2k-1)}]_X.
\end{eqnarray*}
We deduce that
\begin{align*}
\bigcap_{n \geq 1} f^n(X) &= 
\bigcap_{k \geq 1} f^{2k-1}(X) \cap f^{2k}(X) 
\\
& \subseteq 
\bigcap_{k \geq 1} [u^{(2k-2)},u^{(2k-1)}]_X \cap [u^{(2k)},u^{(2k-1)}]_X \\
  &= [u^{(\OP{ev})},u^{(\OP{odd}})]_X = \{u^*\}.
\end{align*}
Notice that $u^* \in [u^{(2k)},u^{(2k-1)}]_X$ (resp. $u^* \in [u^{(2k-2)},u^{(2k-1)}]_X$) for all $k \geq 1$, so since $f$ is order reversing $f(u^*) \in [u^{(2k)},u^{(2k+1)}]_X$ (resp. $u^* \in [u^{(2k)},u^{(2k-1)}]_X$) for all $k$.
Therefore $f(u^*) \in \{u^*\}$, so $u^*$ is a fixed point of $f$.

If $y \in X$ is a fixed point of $f$ then by induction $y \in f^n(X)$ for all $n$, so $y \in \cap_n f^n(X)=\{u^*\}$ and it follows that $y=u^*$.
Therefore $u^*$ is the unique fixed point of $f$.

Finally, choose some $x \in X$ and define a sequence by recursion $x^{(0)}=x$ and $x^{(n+1)}=f(x^{(n)})$.
Then $u^{(0)}=0 \preceq x$ and since $f$ is order reversing, $u^{(0)} \preceq x^{(1)} \preceq u^{(1)}$.
Then one proves by induction that $u^{(2k-2)} \preceq x^{(2k-1)} \preceq u^{(2k-1)}$ and $u^{(2k)} \preceq x^{(2k)} \preceq u^{(2k-1)}$ for all $k \geq 1$.
By the sandwich rule $\lim_n x^{(n)}=u^*$.

\end{proof}

We can now prove the following.
\begin{theorem}\label{th:satisfy_conditions}
A function $\Phi_w:X \to X$, based on some argumentation kernel functions, satisfies the conditions of Theorem \ref{th:pu-generalisation}. That is, $\Phi_w$ is order reversing and there exists $0 < \alpha \leq 1$ such that $\Phi_w(tx) \preceq \tfrac{1}{t+\alpha(1-t)} \Phi_w(x)$, for any $x \in X$ and any $0 \leq t \leq 1$.
\end{theorem}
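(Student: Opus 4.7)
The plan is to verify the two required conditions separately, each relying on a different one of the three defining properties of an argumentation kernel.

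For order reversal, suppose $x \preceq y$. Then monotonicity of each $\varphi_i$ gives $\varphi_i(x) \leq \varphi_i(y)$, so $1 + \varphi_i(x) \leq 1 + \varphi_i(y)$. Since $w_i \geq 0$, taking reciprocals and multiplying by $w_i$ reverses the inequality, so $\Phi_w(y)_i \leq \Phi_w(x)_i$ for every $i$, i.e., $\Phi_w(y) \preceq \Phi_w(x)$.

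For the scaling inequality, use homogeneity to write $\Phi_w(tx)_i = \frac{w_i}{1+t\varphi_i(x)}$. When $w_i = 0$ the inequality is trivial, so assume $w_i > 0$, cancel $w_i$, and clear denominators. Writing $s_i = \varphi_i(x) \geq 0$, the target inequality becomes
\begin{equation*}
(t + \alpha(1-t))(1 + s_i) \leq 1 + t s_i.
\end{equation*}
A short rearrangement shows this is equivalent to $(1-t)\bigl[1 - \alpha(1 + s_i)\bigr] \geq 0$. Since $0 \leq t \leq 1$, it suffices to choose $\alpha > 0$ so that $\alpha(1 + \varphi_i(x)) \leq 1$ for every $i$ and every $x \in [0,1]^n$.

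To produce such an $\alpha$, set $M = \max_i \varphi_i(\mathbf{1})$, where $\mathbf{1} = (1,\dots,1)$. By monotonicity, $\varphi_i(x) \leq \varphi_i(\mathbf{1})$ for all $x \in [0,1]^n$, and by continuity on the compact set $[0,1]^n$ each $\varphi_i(\mathbf{1})$ is finite, so $M < \infty$. Define $\alpha = \frac{1}{1+M}$. Then $0 < \alpha \leq 1$, and for all $i$ and $x \in [0,1]^n$ we get $\alpha(1 + \varphi_i(x)) \leq \alpha(1 + M) = 1$, which establishes the required inequality and completes the proof.

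The only mildly delicate step is choosing $\alpha$ uniformly in $i$ and $x$; the homogeneity property is what allows us to reduce the scaling inequality to a purely algebraic one-variable condition, and monotonicity together with compactness of $[0,1]^n$ is what guarantees that $\alpha$ can be chosen strictly positive.
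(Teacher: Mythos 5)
Your proof is correct and follows essentially the same route as the paper: monotonicity of the $\varphi_i$ gives order reversal, and homogeneity reduces the scaling inequality to finding a uniform $\alpha>0$ with $\alpha \le \tfrac{1}{1+\varphi_i(x)}$ for all $i$ and $x$. The only cosmetic difference is that you obtain this $\alpha$ by bounding $\varphi_i(x) \le \varphi_i(\mathbf{1})$ via monotonicity, whereas the paper takes $\alpha = \min_i \min_{x \in X} \tfrac{1}{1+\varphi_i(x)}$ via continuity and compactness --- these yield the same constant.
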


\begin{proof}
Since $\varphi_i$ are monotonic, we get that if $x \preceq y$ then for any $i$
\[
\Phi_w(x)_i 
= \frac{w_i}{1+\varphi_i(x)} 
\geq \frac{w_i}{1+\varphi_i(y)} 
= \Phi_w(y).
\]
So $\Phi_w \colon X \to X$ is order reversing.
For the second condition, observe that the functions $\psi_i \colon X \to \RR$ defined by $\psi_i(x) = \tfrac{1}{1+\varphi_i(x)}$ are well defined, continuous and take value in the interval $(0,1]$.
Since $X$ is compact, set
\[
\alpha = \min_i \min_{x \in X} \tfrac{1}{1+\varphi_i(x)}.
\]
Then $0 < \alpha \leq 1$.
By construction, for any $x \in X$ and any $0 \leq t \leq 1$ and any $i=1,\dots,n$

\begin{align*}
\Phi_w(t x)_i
&= \frac{w_i}{1+\varphi_i(x)}  \\
&= \frac{w_i}{(1-t)+t(1+\varphi_i(x))} \\
&= \frac{\tfrac{w_i}{1+\varphi_i(x)}}{t+(1-t) \tfrac{1}{1+\varphi_i(x)}} \\
&\leq \frac{1}{t+\alpha(1-t)} \Phi_w(x)_i.
\end{align*}

Notice that the denominator is always positive.
It follows that $\Phi_w(t x)  \preceq \tfrac{1}{t+\alpha(1-t)} \Phi_w(x)$.
\end{proof}

This theorem, combined with Theorem \ref{th:pu-generalisation} directly proves Theorem \ref{th:fix_point}.

\begin{definition}\label{def:k}
Let $\varphi_i$ be some argumentation kernel functions, for $i = 1, \dots, n$ and $W = [0,1]^n$.
We define a function $h_\varphi \colon W \to X$ such that $h_\varphi((w_1, \dots, w_n))$ is the unique fixed point of $\Phi_w$. 
The image of $h_\varphi$ will be denoted by $H_\varphi$ and will also be referred to as the \textit{acceptability degree space} in later sections.
We also define a function $k_\varphi: X \to \RR^n$ such that $k_\varphi(x)_i =x_i(1+\varphi_i(x))$.
\end{definition}

We can demonstrate that any semantics which builds on an argumentation kernel is continuous through the following theorem.

\begin{theorem}
\label{th:homeomor}

The restriction of $k_\varphi$  to $H_\varphi$ gives a homeomorphism $k_\varphi \colon H_\varphi \to W$ whose inverse is $h_\varphi$.
In particular $h_\varphi \colon W \to X$ restricts to a homeomorphism onto $H_\varphi$.

\begin{proof}
Consider some $x \in H_\varphi$.
Then $x=h(w)$ for some $w = (w_1, \dots, w_n) \in W$.
By definition $x$ is a fixed point of $\Phi_w \colon X \to X$ so $x_i=\tfrac{w_i}{1+\varphi_i(x)}$ for all $i=1,\dots,n$, which is equivalent to $w_i=x_i(1+\varphi_i(x))$.
In other words, $k_\varphi(x)=w$.
We see that $k_\varphi$ restricts to a function $k_\varphi \colon H_\varphi \to W$ and moreover $k_\varphi(h_\varphi(w))=w$ for all $w \in W$. Observe thus that $k_\varphi \colon H_\varphi \to W$ is surjective.

Suppose that $x,x' \in H_\varphi$ are such that $k_\varphi(x)=k_\varphi(x')$.
That is, $k_\varphi(x)=k_\varphi(x')=w \in W$.
But $k_\varphi(x)=w$ means that $w_i=x_i(1+\varphi_i(x))$ so $x$ is a fixed point of $\Phi_w \colon X \to X$.
Similarly, $k_\varphi(x')=w$ means that $x'$ is a fixed point of $\Phi_w$.
But $\Phi_w$ has a unique fixed point in $X$, so $x=x'$.
It follows that $k_\varphi \colon H_\varphi \to W$ is injective.
Thus, $k_\varphi$ is bijective, and $k_\varphi \circ h_\varphi$ yielding $w$  implies that $k_\varphi^{-1}=h_\varphi$.

Next, we claim that $H_\varphi$ is a closed subset of $X$, hence it is compact.
To see this, let $x^{(i)} \in H_\varphi$ be a convergent sequence in $X$ with limit $y$.
Set $w^{(i)}=k_\varphi(x^{(i)})$.
Then $w^{(i)}$ in $W$ and since $k_\varphi$ is continuous $\lim_i w^{(i)} = \lim_i k_\varphi(x^{(i)})=k_\varphi(y)$. 
Since $W$ is closed in $\RR^n$ we deduce that $k_\varphi(y) \in W$.
But since $h_\varphi$ is the inverse of $k_\varphi \colon H_\varphi \to W$ this implies that $y=h_\varphi(k_\varphi(y) \in H_\varphi$.
This shows that $H_\varphi$ is closed, as needed.

Thus, $H_\varphi$ is a compact subset of $\RR^n$ and $k_\varphi$ is a bijective continuous function between compact metric spaces.
It is therefore a homeomorphism, and consequently so is $h_\varphi \colon W_\varphi \to H$.
\end{proof}
\end{theorem}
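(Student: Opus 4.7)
The plan is to produce two mutually inverse continuous maps between $H_\varphi$ and $W$. The algebraic half is essentially a rearrangement of the fixed point equation: for $w \in W$, let $x = h_\varphi(w)$, so $x$ is the unique fixed point of $\Phi_w$, meaning $x_i = w_i/(1+\varphi_i(x))$ for each $i$. Multiplying through gives $w_i = x_i(1+\varphi_i(x)) = k_\varphi(x)_i$, so $k_\varphi \circ h_\varphi = \mathrm{id}_W$. In particular $k_\varphi(H_\varphi) \subseteq W$ and the restriction $k_\varphi \colon H_\varphi \to W$ is surjective. For injectivity, if $k_\varphi(x) = k_\varphi(x') = w$ with $x, x' \in H_\varphi$, the same algebraic manipulation shows that both $x$ and $x'$ are fixed points of $\Phi_w$, and Theorem \ref{th:fix_point} forces $x = x'$.

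Continuity of $k_\varphi$ is immediate from continuity of each $\varphi_i$ (a kernel axiom), so I obtain a continuous bijection $k_\varphi \colon H_\varphi \to W$. The main obstacle is obtaining continuity of the inverse, i.e., of $h_\varphi$. The cleanest route is the classical topology fact that a continuous bijection from a compact space onto a Hausdorff space is automatically a homeomorphism. Since $W \subseteq \RR^n$ is Hausdorff, the remaining task reduces to establishing that $H_\varphi$ is compact.

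To show compactness, I would show $H_\varphi$ is closed in the compact cube $X = [0,1]^n$. Take a sequence $x^{(k)} \in H_\varphi$ converging in $X$ to some $y$. Continuity of $k_\varphi$ on $X$ gives $k_\varphi(x^{(k)}) \to k_\varphi(y)$; since each $k_\varphi(x^{(k)}) \in W$ and $W = [0,1]^n$ is closed in $\RR^n$, the limit $w := k_\varphi(y)$ lies in $W$. But $k_\varphi(y) = w$ means $y_i = w_i/(1+\varphi_i(y))$, so $y$ is a fixed point of $\Phi_w$; by uniqueness, $y = h_\varphi(w) \in H_\varphi$. Thus $H_\varphi$ is closed, hence compact.

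Combining these pieces, $k_\varphi \colon H_\varphi \to W$ is a continuous bijection from a compact space to a Hausdorff space, hence a homeomorphism, with inverse necessarily equal to $h_\varphi$ (since $k_\varphi \circ h_\varphi = \mathrm{id}_W$). I expect the closedness of $H_\varphi$ to be the only non-formal step; everything else is bookkeeping on the fixed point equation together with uniqueness from Theorem \ref{th:fix_point}.
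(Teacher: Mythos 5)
Your proposal is correct and follows essentially the same route as the paper: establish $k_\varphi \circ h_\varphi = \mathrm{id}_W$ from the fixed-point equation, get injectivity from uniqueness of the fixed point, show $H_\varphi$ is closed in $X$ (hence compact) via a convergent-sequence argument using continuity of $k_\varphi$ and closedness of $W$, and conclude by the compact-to-Hausdorff continuous-bijection theorem. No gaps to report.
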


The previous theorem is crucial as it shows that for any gradual semantics based on argumentation kernel functions and any arbitrary WAF, one cannot obtain the same scoring function with two different weighting functions on arguments.
The continuity is also an important result which will be used during the bisection method to solve the inverse problem, as per \cite{oren2022inverse}.

\section{The Inverse Problem}\label{sec:problem}

Recall that our goal is to find a set of initial weights which, when applied to a specific argumentation framework under the chosen semantics will result in a desired preference ordering,  derived from the numeric acceptance degree computed for each argument.

The approach described in \cite{oren2022inverse} makes use of two phases to solve the inverse problem. In the first phase, a target acceptance degree is computed for each argument. In the second phase, a numerical method (the bisection method)  is used to find the initial weights which lead to this acceptance degree. Since the bisection method is designed to find the zeros of a function with only one variable, and since changing the initial weight of one argument can affect the acceptance degree of other arguments, repeated applications of the bisection method are often necessary. \cite{oren2022inverse} identify several strategies for selecting the argument to which the bisection method should be applied, and demonstrate that selecting the argument whose current acceptance degree is furthest away from its target acceptance degree works well in practice.

In this work we will provide an alternative approach to calculate the initial weights to achieve a desired acceptance degree. We therefore recall how \cite{oren2022inverse} identifies target acceptance degrees for solving the inverse problem. We begin by noting that the desired preference orderings can be represented as a sequence of non-empty sets $[Ar_0, \ldots, Ar_n]$ which partition the set of arguments $\A$ such that for any $a,b \in Ar_i$, $0 \leq i \leq n$, $a \simeq b$, and for any $a \in Ar_i, b \in Ar_j$ where $0 \leq i < j \leq n$, $a \triangleright b$.

Algorithm \ref{alg:mub}, taken from \cite{oren2022inverse}, associates a \emph{minimal upper bound} with every argument in the system. This minimal upper bound --- in effect --- identifies an achievable final degree for an argument by assuming that all of an argument's attackers have some large value.

If we consider argumentation kernel functions $\varphi_i(x)$ instead, what Algorithm \ref{alg:mub} does is identify a maximum value for $\varphi_i(x)$ in terms of other arguments in the system, and set the minimal upper bound based on this value. The main part of the algorithm can therefore be more generally rewritten for our larger class of semantics as shown in Algorithm \ref{alg:mub2}.

\begin{algorithm}[t]
\begin{algorithmic}
\Function{ComputeBounds}{$[],\_,\_$}
\State \Return $\{\}$
\EndFunction\medskip
\Function{ComputeBounds}{$[Ar_0, \ldots, Ar_n],max,\sigma$}
\Switch{$\sigma$}
\Case{$\sigma_{MB}$} 
  $min \gets max/(1+max+\zeta)$
\EndCase
\Case{$\sigma_{HC}$}
  $min \gets max/(1+\max\limits_{a \in Ar_0} |\Att(a)|+\zeta)$
\EndCase  
\Case{$\sigma_{CB}$}
  $min \gets max/(2+\max\limits_{a \in Ar_0} |\Att(a)|+\zeta)$
\EndCase
\EndSwitch
\State \Return $\{(Ar_0,min)\} \cup$ \Call{ComputeBounds}{$[Ar_1, \ldots, Ar_n],min, \sigma$}
\EndFunction
\medskip

\Function{ComputeBounds}{$[Ar_0, \ldots, Ar_n], \sigma$}

\Return \Call{ComputeBounds}{$[Ar_0, \ldots, Ar_n],1,\sigma$}
\EndFunction
\end{algorithmic}
\caption{Computing arguments' minimal upper bounds.} \label{alg:mub}
\end{algorithm}

\begin{algorithm}[t]
\begin{algorithmic}
\Function{ComputeBounds}{$[Ar_0, \ldots, Ar_n],max,\sigma$}
\State Let $\varphi_i$ be the argumentation kernel of $\sigma$
\State Let $m$ be the maximal value of $\varphi_i$ when evaluated for $i$ over all arguments in $Ar_0$.
\State $min \gets max/(1+m+\zeta)$
\State \Return $\{(Ar_0,min)\} \cup$ \Call{ComputeBounds}{$[Ar_1, \ldots, Ar_n],min, \sigma$}
\EndFunction
\end{algorithmic}
\caption{\label{alg:mub2}More general computation of arguments' minimal upper bound using an argumentation kernel.}
\end{algorithm}

We now depart from \cite{oren2022inverse}; whereas they used the bisection method to identify initial weights for arguments which achieve the desired acceptability degrees, we consider an analytic approach for doing so.

\section{Computing Initial Weights from Acceptability Degrees}\label{sec:analitic-approach}

In this section we identify a vector representation for each semantics which allows us to compute appropriate initial weights. 
Namely, given an arbitrary WAF $\AF = \langle \A, \D, w \rangle$ such that $\A = \{a_1, a_2, \dots, a_n\}, \overrightarrow{w}$ is the column vector $(w(a_1), w(a_2), \dots, w(a_n))^T$, and $\mathbb{A}$ is the adjacency matrix where $\mathbb{A}_{ij} = 1$ iff $(a_j, a_i) \in \D$ and 0 otherwise. 
Note that $\overrightarrow{1}$ is a column vector containing $1$s of length equal to $|\A|$.

For each semantics $\sigma_X$, for $X \in \{MB, CB, HC\}$, we denote by $\overrightarrow{X}_\infty$, the column vector $(\sigma^\AF_X(a_1), \sigma^\AF_X(a_2), \dots, \sigma^\AF_X(a_n))^T$ containing the acceptability degrees of all arguments.
We now consider each semantics in turn.

\subsection{H-categorizer semantics}

As we will do for all other semantics, we examine the equation for the weighted h-categorizer semantics once convergence has taken place. It holds that:

\[
\overrightarrow{HC}_\infty=\frac{\overrightarrow{w}}{\overrightarrow{1}+\mathbb{A}\overrightarrow{HC}_\infty}
\]


Here, the division occurs in an element-wise manner.
We can rewrite this equation as:

\[
\overrightarrow{HC}_\infty+\mathbb{M}\mathbb{A}\overrightarrow{HC}_\infty = \overrightarrow{w}
\]

Where $\mathbb{M}$ is the diagonal matrix such that $\mathbb{M}_{ii} = \sigma^\AF_{HC}(a_i)$, for $1 \leq i \leq n$, and $0$ otherwise.
Since we know the values of $\overrightarrow{HC}_\infty$ and $\mathbb{M}$ (the minimum upper bounds), as well as $\mathbb{A}$ (the adjacency matrix obtained from the structure of our argumentation graph), we can therefore compute $\overrightarrow{w}$ (the initial weights) directly from the above equation.

\subsection{Card-Based Semantics}
Following the same process as above, at convergence, the weighted card-based semantics can be written as:

\[
\overrightarrow{CB}_\infty+\mathbb{D}\overrightarrow{CB}_\infty+\mathbb{D}^{-1}\mathbb{M}\mathbb{A}\overrightarrow{CB}_\infty=\overrightarrow{w}
\]

Here $\mathbb{D}$ is the diagonal matrix such that $\mathbb{D}_{ii} = |\Att^*(a_i)|$, for $1 \leq i \leq n $, and $0$ otherwise.
Note that this matrix will contain only 0s in  row $i$ if argument $a_i$ is unattacked or all its attackers have an initial weight of 0. However from the definition of the semantics, the acceptability degree of such an argument should be equal to its initial weight. In cases where the diagonal element is non-zero, we can simply solve the equation by performing an index-wise calculation.  Notice that  $\mathbb{D}^{-1}$ is also a diagonal matrix where the elements are the reciprocal of the diagonal elements of  $\mathbb{D}$. 

\subsection{Weighted Max-Based Semantics}

The $\max$ operation present in these semantics can be written in a vector format as:


\[
\overrightarrow{w}=\overrightarrow{MB}_\infty+\mathbb{M}\max\{\mathbb{A}\mathbb{O}\}, 
\]
\noindent

Where  $\mathbb{O}$ is a square matrix whose columns are all equal to the vector $\overrightarrow{MB}_\infty$, and  $\max\{\mathbb{A}\mathbb{O}\}$ takes the largest element from each row of $\mathbb{A}\mathbb{O}$ to form a column vector. 

\subsection{Generalisation}

We observe that in general, the function $k_\varphi$, as per Definition \ref{def:k}, can be used to compute the initial weights by passing the desired final degree as input (i.e., given an acceptability degree vector $h \in H_\varphi$, we can compute $k_\varphi(h)$ to obtain our initial weights analytically).

\begin{example}
Consider the argumentation system shown in Figure \ref{fig:ex1}, and assume we wish to obtain the preference ordering $a_0 \triangleright a_1 \simeq a_3 \triangleright a_2$.
Table \ref{tab:acceptDeg} show, for each semantics, the acceptability degrees as computed by Algorithm \ref{alg:mub} (assuming $\zeta=1$). Table \ref{tab:initialWeights} then shows the initial weights necessary to achieve these acceptability degrees.

\begin{table}[t]
    \centering
    \renewcommand{\arraystretch}{1.3}
    \begin{tabular}{|c|c|c|c|c|}
    \hline
    & $a_0$& $a_1$& $a_2$& $a_3$ \\
    \hline
    $\sigma^\mathcal{F}_{HC}$ & 0.5 & 0.167 & 0.0278 & 0.167 \\
    \hline
    $\sigma^\mathcal{F}_{MB}$ & 0.333 & 0.111 & 0.037 & 0.111 \\
    \hline
    $\sigma^\mathcal{F}_{CB}$ & 0.333 & 0.0833 & 0.012 & 0.083 \\
    \hline
    \end{tabular}
    \caption{\label{tab:acceptDeg}Acceptability degrees computed by Algorithm \ref{alg:mub} ($\zeta=1$)}
\end{table}

\begin{table}[t]
    \centering
    \renewcommand{\arraystretch}{1.3}
    \begin{tabular}{|c|c|c|c|c|}
    \hline
    & $w(a_0)$& $w(a_1)$& $w(a_2)$& $w(a_3)$ \\
    \hline
    $\sigma_{HC}$ & 0.5 & 0.194 & 0.052 & 0.167 \\
    \hline
    $\sigma_{MB}$ & 0.333 & 0.123 & 0.0494 & 0.111 \\
    \hline
    $\sigma_{CB}$ & 0.333 & 0.174 & 0.061 & 0.083 \\
    \hline
    \end{tabular}
    \caption{\label{tab:initialWeights}Initial weights computed for each semantics}
\end{table}

\end{example}

\subsection{Properties}
Our results from the previous sections address the Weighting Validity conjecture described in \cite{oren2022inverse}, which states that initial weights can be found such that the acceptability degrees of each argument is equal to the corresponding the minimum upper bound generated by Algorithm \ref{alg:mub}.

We can also easily see that decreasing (resp. increasing) the acceptability degree of some arguments without increasing (resp. decreasing) the degree of any other arguments can only come about by only decreasing (resp. increasing) some initial weights while not increasing (resp. decreasing) any initial weights.

\begin{proposition}
Let us consider two WAFs $\AF = \langle \{ a_1, a_2, \dots, a_n\}, \D ,w \rangle$ and $\AF' = \langle \{ a_1, a_2, \dots, a_n\}, \D,w' \rangle$ and the two corresponding acceptability degree vectors $\overrightarrow{X}
_\infty,\overrightarrow{X'}_\infty$, for $X \in \{ HC, MB, CB\}$ such that for any $1 \leq i,j\leq n, (\overrightarrow{X}_{\infty })_i \leq (\overrightarrow{X'}_{\infty})_j$ $(resp. (\overrightarrow{X}_{\infty })_i \geq (\overrightarrow{X'}_{\infty})_j)$, it is the case that for any $1 \leq i,j\leq n, \overrightarrow{w}_i \leq \overrightarrow{w'}_j$ (resp. $\overrightarrow{w}_i \geq \overrightarrow{w'}_j$).
\end{proposition}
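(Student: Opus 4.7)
My plan is to leverage the analytic formula $k_\varphi(x)_i = x_i(1+\varphi_i(x))$ from Definition \ref{def:k}, which by Theorem \ref{th:homeomor} is exactly the map sending the acceptability degree vector back to the initial weight vector. So the proposition reduces to showing that $k_\varphi$ is monotone (order preserving) on $H_\varphi$. I read the statement as the natural pointwise claim that $\overrightarrow{X}_\infty \preceq \overrightarrow{X'}_\infty$ implies $\overrightarrow{w} \preceq \overrightarrow{w'}$ (and its reverse); the $i,j$ quantifiers in the displayed formulas appear to be a notational slip, but the strong ``every-component-against-every-component'' reading follows from the pointwise version as a special case, so either way it suffices to prove the pointwise statement.

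First, I would recall that each of the three semantics $\sigma_{HC}, \sigma_{MB}, \sigma_{CB}$ is covered by Theorem \ref{th:fix_point} via the explicit argumentation kernels listed after it, so $k_\varphi$ is well defined and, by Theorem \ref{th:homeomor}, inverts the weight-to-degree map. Hence the identities
\[
\overrightarrow{w} = k_\varphi(\overrightarrow{X}_\infty), \qquad \overrightarrow{w'} = k_\varphi(\overrightarrow{X'}_\infty)
\]
hold, and the whole proposition comes down to a componentwise comparison of these two vectors.

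Next, I would assume $(\overrightarrow{X}_\infty)_i \leq (\overrightarrow{X'}_\infty)_i$ for every $i$ and argue directly from the kernel axioms: monotonicity of $\varphi_i$ gives $\varphi_i(\overrightarrow{X}_\infty) \leq \varphi_i(\overrightarrow{X'}_\infty)$, so $1+\varphi_i(\overrightarrow{X}_\infty) \leq 1+\varphi_i(\overrightarrow{X'}_\infty)$. Since both acceptability degrees and kernel values are non-negative, multiplying the two pointwise inequalities yields
\[
(\overrightarrow{X}_\infty)_i\bigl(1+\varphi_i(\overrightarrow{X}_\infty)\bigr) \leq (\overrightarrow{X'}_\infty)_i\bigl(1+\varphi_i(\overrightarrow{X'}_\infty)\bigr),
\]
which is precisely $\overrightarrow{w}_i \leq \overrightarrow{w'}_i$. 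The reverse inequality (``$\geq$'' in the hypothesis giving ``$\geq$'' in the conclusion) is obtained by interchanging the roles of $\overrightarrow{X}_\infty$ and $\overrightarrow{X'}_\infty$.

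The only real subtlety is the implicit interpretation of the statement and ensuring that the product of two non-negative inequalities really is valid here; both are immediate once one observes that all quantities involved lie in $[0,\infty)$. No root-finding or fixed-point iteration is needed: the whole content of the proposition is the monotonicity of the explicit map $k_\varphi$, which follows in one line from the monotonicity clause in the definition of an argumentation kernel.
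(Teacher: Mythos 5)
Your pointwise argument is correct, and it is essentially the paper's own (implicit) justification: the paper gives no explicit proof of this proposition, relying instead on the sentence preceding it and on the machinery of Definition~\ref{def:k} and Theorem~\ref{th:homeomor}. Since $\overrightarrow{w}=k_\varphi(\overrightarrow{X}_\infty)$ with $k_\varphi(x)_i=x_i(1+\varphi_i(x))$, the implication $\overrightarrow{X}_\infty\preceq\overrightarrow{X'}_\infty\Rightarrow\overrightarrow{w}\preceq\overrightarrow{w'}$ follows in one line from monotonicity and non-negativity of the kernels, exactly as you write.

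One genuine error in your preamble, though: the claim that the literal ``every component against every component'' reading ``follows from the pointwise version as a special case'' is false, and you should not have hedged with ``either way it suffices.'' A stronger hypothesis does not automatically yield a stronger conclusion, and in fact the literal all-pairs statement is false. Take $\sigma_{HC}$ with $\A=\{a_1,a_2\}$ and $\D=\{(a_2,a_1)\}$, and let $\overrightarrow{X}_\infty=\overrightarrow{X'}_\infty=(1/2,1/2)$, so the all-pairs hypothesis holds trivially. Then $\varphi_1(x)=x_2$ and $\varphi_2(x)=0$, hence $\overrightarrow{w}=\overrightarrow{w'}=(3/4,1/2)$, and the all-pairs conclusion would demand $\overrightarrow{w}_1\leq\overrightarrow{w'}_2$, i.e.\ $3/4\leq 1/2$. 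The obstruction is precisely that for $i\neq j$ your argument would need $\varphi_i(\overrightarrow{X}_\infty)\leq\varphi_j(\overrightarrow{X'}_\infty)$, but $\varphi_i$ and $\varphi_j$ aggregate over different attacker sets. So the proposition is only tenable under the pointwise reading ($j=i$ throughout), which the surrounding prose confirms is what is intended; your proof of that version is complete. A minor further wrinkle worth flagging: for $\sigma_{CB}$ the kernel involves $\Att^*$, which depends on the weighting function, so the two WAFs may strictly speaking induce different kernels --- a subtlety the paper itself glosses over.
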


It trivially follows that for $X \in \{ HC, MB, CB\}$, if $\overrightarrow{X}_\infty$ is a valid acceptability degree vector, so is any $\overrightarrow{X'}_\infty$ such that for any $1 \leq i \leq n, 0 \leq (\overrightarrow{X'}_\infty)_i \leq (\overrightarrow{X}_{\infty})_i$.
Similarly, increasing the acceptability degree of one argument will increase (or not affect) all initial weights, but we must then ensure that no initial weights exceeds 1. 
For example, if $\AF = \langle \{a \}, \{ (a,a)\}, w \rangle$ with $w(a)=1$ then $\sigma^\AF_{HC}(a) \approx 0.62$. In this case, it is not possible to increase the acceptability degree of $a$ without the initial weight of $a$ going above $1$.


The result of Theorem \ref{th:homeomor} demonstrates that  for any given unweighted argumentation graph, the acceptability degree space (i.e., the set of all valid acceptability degree vectors $H_\varphi$) for $\sigma \in \{\sigma_{MB}, \sigma_{CB}, \sigma_{HC}\}$, will be continuous. Namely, we can always find a sequence of acceptability degree vectors that ``links'' two acceptability degree vectors. This builds on the result of \cite{oren2022inverse}, which demonstrated continuity, but only in the interval $[0,\infty)$.
However, this result does not mean that the acceptability degree space will be convex, i.e.\ if $\overrightarrow{X_\infty^1}$ and $\overrightarrow{X_\infty^2}$ are two valid acceptability degree vectors, then $\alpha \overrightarrow{X_\infty^1} + (1-\alpha) \overrightarrow{X_\infty^2}$, for $\alpha \in [0,1]$, will not always be a valid acceptability degree vector.
For example, if $\langle \{a_1, a_2, a_3\}, \{(a_1,a_2), (a_2,a_1), (a_1,a_3), (a_2,a_3) \} \rangle$ is an unweighted argumentation graph, we can find weighting functions such that both $(0,0,1)$ and $(1,0,0.5)$ are valid acceptability degree vectors w.r.t. $\sigma_{HC}$ but there is no $\alpha \in ]0,1[$ such that $\alpha(0,0,1) + (1-\alpha)(1,0,0.5)$ is a valid acceptability degree vector.

In the next proposition, we show that having self-attacking arguments prevent us from achieving some acceptability degree vectors.

\begin{proposition}
Given an unweighted argumentation graph $\langle \{a_1, a_2,\dots a_n\}, \D \rangle$, for all for $X \in \{HC, MB,CB\}$, $i \in \{1, \dots, n\}$, there exists $\overrightarrow{X_\infty}$ in the acceptability degree space such that $[\overrightarrow{X_\infty}]_i = 1$ iff there is no $a \in \{a_1, \dots, a_n\}$ such that $(a,a) \in \D$.
\end{proposition}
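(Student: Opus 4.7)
The statement is an equivalence between the universal claim (for every semantics $X$ and every index $i$, some acceptability degree vector achieves $[\overrightarrow{X_\infty}]_i = 1$) and the global condition that no argument has a self-loop. I would prove it by explicit construction in the backward direction and by contrapositive in the forward direction.

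For the backward direction, assume no argument is self-attacking. Fix $X \in \{HC, MB, CB\}$ and $i \in \{1, \ldots, n\}$, and take the weighting $w(a_i) = 1$ and $w(a_j) = 0$ for all $j \neq i$. Each of the three semantics has the form $x_j = w(a_j)/(1 + \varphi_j(x))$ with $\varphi_j(x) \geq 0$, so any coordinate $j$ with $w(a_j) = 0$ satisfies $[\overrightarrow{X_\infty}]_j = 0$ at the unique fixed point. By the no-self-loop assumption, every attacker $b$ of $a_i$ satisfies $b \neq a_i$ and hence contributes $0$ to $\varphi_i$. Therefore $[\overrightarrow{HC_\infty}]_i = 1/(1+0) = 1$ and $[\overrightarrow{MB_\infty}]_i = 1/(1+0) = 1$. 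For CB, since every attacker of $a_i$ has weight $0$, the set $\Att^*(a_i)$ is empty, so the special case of the definition yields $[\overrightarrow{CB_\infty}]_i = w(a_i) = 1$.

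For the forward direction, I argue by contrapositive: if $(a_k, a_k) \in \D$ for some $k$, then no acceptability vector can have coordinate $k$ equal to $1$ under any of the three semantics, so taking $i = k$ defeats the universal claim. Suppose for contradiction that $[\overrightarrow{X_\infty}]_k = 1$. In the HC case, the fixed-point equation gives $1 = w(a_k)/(1 + \sum_{b \in \Att(a_k)} [\overrightarrow{X_\infty}]_b)$; since $a_k \in \Att(a_k)$ contributes $1$ to the sum and the other terms are non-negative, the denominator is $\geq 2$, forcing $w(a_k) \geq 2$ and contradicting $w(a_k) \in [0,1]$. The MB case is identical because $\max_{b \in \Att(a_k)} [\overrightarrow{X_\infty}]_b \geq [\overrightarrow{X_\infty}]_k = 1$. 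For CB, note that $w(a_k) \geq [\overrightarrow{X_\infty}]_k = 1 > 0$, so $a_k \in \Att^*(a_k)$ and $|\Att^*(a_k)| \geq 1$, making the denominator at least $1 + |\Att^*(a_k)| + 1/|\Att^*(a_k)| \geq 3$ and again forcing $w(a_k) > 1$.

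The main obstacle is the piecewise definition of $\Att^*$ in the CB semantics; in the backward direction it requires invoking the special convention that an argument with no positively-weighted attackers takes acceptability equal to its own weight, and in the forward direction it slightly complicates the denominator estimate, although the bound $\geq 3$ is more than enough.
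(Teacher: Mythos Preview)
Your proof is correct and follows the same two-part structure as the paper: the backward direction uses the identical construction (weight $1$ on $a_i$, weight $0$ elsewhere), and the forward direction proceeds by contrapositive, showing that a self-loop at $a_k$ prevents coordinate $k$ from reaching $1$. The only difference is cosmetic: the paper states the exact maximum acceptability of a self-attacking argument (the golden-ratio conjugate $(-1+\sqrt{5})/2$ for $HC$ and $MB$, and $-1+\sqrt{2}$ for $CB$), whereas you derive a direct contradiction from the fixed-point equation by bounding the denominator; your argument is in fact slightly more self-contained, and your handling of the $CB$ empty-$\Att^*$ case in the backward direction is more explicit than the paper's.
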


\begin{proof}
This proof is split into two parts. First, assume that there is a self-attacking argument $a_i \in \A$, it is easy to show that the maximum acceptability degree of $a_i$ is $(-1+ \sqrt{5})/2 \simeq 0.618$, for $\sigma_{HC}$ and $\sigma_{MB}$, and $-1 + \sqrt{2} \simeq 0.41$ for $\sigma_{CB}$, thus it is not possible to find an acceptability degree vector $\overrightarrow{X_\infty}$ such that $[\overrightarrow{X_\infty}]_i = 1$, for $X \in \{HC, MB,CB\}$.
Now, assume that there are no self-attacking arguments in $\A$, then for all $X \in \{HC, MB,CB\}$, $i \in \{1, \dots, n\},$ we can create an acceptability degree vector $\overrightarrow{X_\infty}$ such that $[\overrightarrow{X_\infty}]_i = 1$ by putting the weight of $a_i$ to 1 and the initial weights of all other arguments to $0$.
\end{proof}

\begin{definition}
Given two unweighted argumentation graphs $\langle \A, \D \rangle$ and $\langle \A', \D'\rangle$, we say that $f$ is an isomorphism from $\A$ to $\A'$ if $(a,b) \in \D$ iff $(f(a), f(b)) \in \D'$.
\end{definition}

The next proposition shows that isomorphic arguments will induce a symmetry effect in the acceptability degree space.

\begin{proposition}
Given an unweighted argumentation graph $\langle \A, \D \rangle, \A = \{a_1, \dots, a_n\}$ and $a_i, a_j \in \A$ such that there is an isomorphism $f$ from $\A$ to $\A$ and $f(a_i) = a_j$, it holds that for all $X \in \{ HC, MB, CB\},$ $\overrightarrow{X_\infty}$ is in the acceptability degree space iff $([\overrightarrow{X_\infty}]_1, \dots, [\overrightarrow{X_\infty}]_{i-1}, [\overrightarrow{X_\infty}]_j, [\overrightarrow{X_\infty}]_{i+1}, \dots,$ $[\overrightarrow{X_\infty}]_{j-1}, [\overrightarrow{X_\infty}]_i, [\overrightarrow{X_\infty}]_{j+1}, \dots,$ $[\overrightarrow{X_\infty}]_n)^T$ is in the acceptability degree space.
\end{proposition}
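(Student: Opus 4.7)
The plan is to leverage the uniqueness of the fixed point established in Theorem \ref{th:fix_point} to transport the acceptability degree vector along the graph automorphism. For definiteness I would interpret the isomorphism $f$ as being the transposition of $a_i$ and $a_j$ (fixing all other arguments); the more general case where $f$ is an arbitrary automorphism sending $a_i$ to $a_j$ is handled by the same argument applied to the full induced permutation.

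Let $P$ denote the $n \times n$ permutation matrix corresponding to the transposition of the $i$-th and $j$-th coordinates. The fact that $f$ is a graph isomorphism translates into the matrix identity $P^{T}\mathbb{A}P=\mathbb{A}$, i.e., the adjacency matrix is invariant under the simultaneous row/column swap induced by $f$. The first key step is to verify that the three kernel functions $\varphi_k$ underlying $\sigma_{HC}$, $\sigma_{MB}$, and $\sigma_{CB}$ are equivariant under this symmetry. Each $\varphi_k$ depends only on the scores of the attackers of $a_k$ (through a sum, a cardinality-normalised sum, or a maximum), so the adjacency invariance yields $\varphi_{\pi(k)}(Px)=\varphi_k(x)$ for all $k$ and all $x\in X$, where $\pi$ is the permutation encoded by $P$. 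Consequently $P\,\Phi_{\overrightarrow{w}}(x) = \Phi_{P\overrightarrow{w}}(Px)$ for every $x\in X$.

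The second step invokes uniqueness of the fixed point. If $\overrightarrow{X_\infty}=h_\varphi(\overrightarrow{w})$ is the unique fixed point of $\Phi_{\overrightarrow{w}}$, then the equivariance above gives $\Phi_{P\overrightarrow{w}}(P\overrightarrow{X_\infty}) = P\,\Phi_{\overrightarrow{w}}(\overrightarrow{X_\infty}) = P\overrightarrow{X_\infty}$, so $P\overrightarrow{X_\infty}$ is a fixed point of $\Phi_{P\overrightarrow{w}}$. By the uniqueness clause of Theorem \ref{th:fix_point}, we conclude $h_\varphi(P\overrightarrow{w})=P\overrightarrow{X_\infty}$. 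The iff now follows directly: if $\overrightarrow{X_\infty}$ lies in the acceptability degree space with witness $\overrightarrow{w}$, then $P\overrightarrow{X_\infty}$ lies in the acceptability degree space with witness $P\overrightarrow{w}$, and the reverse direction is symmetric because $P^{2}=I$. Since $P$ is the transposition swapping the $i$-th and $j$-th coordinates, $P\overrightarrow{X_\infty}$ is precisely the vector in the statement.

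The main obstacle I anticipate is reconciling the argument with the literal phrasing of the proposition. As written, the statement allows $f$ to be any automorphism mapping $a_i$ to $a_j$, not necessarily the transposition that swaps only $a_i$ and $a_j$. Under the general reading, the argument above shows that the full permutation $\pi$ induced by $f$ maps the acceptability degree space onto itself, and extracting just the transposition of $i$ and $j$ from an arbitrary automorphism requires either restricting to involutive $f$ or strengthening the proposition's conclusion to state invariance under $\pi$. Aside from this interpretive point, the only routine verifications are the equivariance identity for each of the three concrete kernels, which are immediate from inspection of the definitions in Definition \ref{def:grad_sem}.
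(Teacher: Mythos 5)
Your proof is correct and is essentially the paper's argument made explicit: the paper disposes of this proposition in one sentence by citing the anonymity principle of Amgoud et al., and your equivariance identity $\Phi_{P\overrightarrow{w}}(Px)=P\,\Phi_{\overrightarrow{w}}(x)$ combined with uniqueness of the fixed point is exactly the content of that principle, worked out from first principles. The interpretive worry you raise at the end is not merely cosmetic --- it is a genuine defect in the literal statement, and your instinct to restrict to the full induced permutation $\pi$ is the right fix. Concretely, take the directed $3$-cycle $\D=\{(a_1,a_2),(a_2,a_3),(a_3,a_1)\}$ under $\sigma_{HC}$: the rotation is an automorphism sending $a_1$ to $a_2$, yet $(0.9,\,0.5,\,0.1)$ lies in the acceptability degree space (all three products $x_i(1+x_{\mathrm{att}(i)})$ are at most $1$) while the transposed vector $(0.5,\,0.9,\,0.1)$ does not, since $0.9\cdot(1+0.5)=1.35>1$. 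So the proposition as written is false for non-involutive automorphisms, and the correct conclusion is that $P_\pi\overrightarrow{X_\infty}$ (for the full permutation $\pi$ induced by $f$) lies in the space, which is precisely what your argument proves.
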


The previous proposition follows directly from the anonymity principle which is satisfied by all three semantics studied in the paper \cite{AMGOUD2022103607}. 
One important observation is that, apart from the structure of the graph, the semantics chosen also plays an important role in the shape of the acceptability degree space.
For instance, if we consider a complete graph of size $n$ with self-attacking arguments, the acceptability degree space for $\sigma_{MB}$ is $\{ (v_1, v_2, \dots, v_n)^T \mid \forall i \in \{1, \dots,n\}, 0 \leq v_i \leq  (-1 + \sqrt{5})/2 \}$ while for $\sigma_{CB}$ it is $\{ (v_1, v_2, \dots, v_n)^T \mid \forall i \in \{ 1, \dots,n \}, 0 \leq v_i \leq (-1 + \sqrt{5})/2$ and $v_i(1+ \sum_{j=1}^n v_j) \in [0,1]\}$.
We provide an graphical representation of the acceptability degree space for $\sigma_{HC}$ on an example in Figure \ref{fig:hcat_degree_space}.
It is clear here that the acceptability degree space for $\sigma_{CB}$ is a subset of the acceptability degree space for $\sigma_{MB}$ for all $n > 0$. In the next proposition, we show that this inclusion holds in the general case.

\begin{proposition}
Given an arbitrary unweighted argumentation graph $\langle \A, \D \rangle, \A = \{a_1, \dots, a_n\}$, the acceptability degree space for $\sigma_{HC}$ is a subset of the acceptability degree space for $\sigma_{MB}$.
\end{proposition}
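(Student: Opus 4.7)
The plan is to leverage the characterisation of the acceptability degree space provided by Theorem \ref{th:homeomor}: for any argumentation kernel $\varphi$, a vector $\overrightarrow{v} \in X = [0,1]^n$ belongs to $H_\varphi$ if and only if $k_\varphi(\overrightarrow{v}) \in W$. The forward direction is immediate from the theorem; for the converse, if $w := k_\varphi(\overrightarrow{v}) \in W$ then the identity $v_i = w_i/(1+\varphi_i(\overrightarrow{v}))$ exhibits $\overrightarrow{v}$ as a fixed point of $\Phi_w$, and the uniqueness part of Theorem \ref{th:fix_point} forces $\overrightarrow{v} = h_\varphi(w) \in H_\varphi$. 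Thus establishing $H_{HC} \subseteq H_{MB}$ reduces to showing that $k_{HC}(\overrightarrow{v}) \in W$ implies $k_{MB}(\overrightarrow{v}) \in W$.

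Next, I would spell out the two argumentation kernels, $\varphi^{HC}_i(x) = \sum_{a_j \in \Att(a_i)} x_j$ and $\varphi^{MB}_i(x) = \max_{a_j \in \Att(a_i)} x_j$ (with the convention $\max \emptyset = 0$, matching the definition of $\sigma_{MB}$), and record the pointwise inequality $0 \leq \varphi^{MB}_i(x) \leq \varphi^{HC}_i(x)$ for every $x \in X$ and every $i$, since the maximum of a finite set of non-negative reals is bounded above by their sum. Taking an arbitrary $\overrightarrow{v} \in H_{HC}$, the characterisation gives $0 \leq v_i(1+\varphi^{HC}_i(\overrightarrow{v})) \leq 1$ for each $i$, and combining this with the kernel comparison yields
\[
0 \leq k_{MB}(\overrightarrow{v})_i = v_i\bigl(1+\varphi^{MB}_i(\overrightarrow{v})\bigr) \leq v_i\bigl(1+\varphi^{HC}_i(\overrightarrow{v})\bigr) = k_{HC}(\overrightarrow{v})_i \leq 1,
\]
so $k_{MB}(\overrightarrow{v}) \in W$ and hence $\overrightarrow{v} \in H_{MB}$ by the same characterisation applied to $\varphi^{MB}$.

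There is no substantial obstacle here: the proof reduces to the coordinatewise domination of the MB kernel by the HC kernel, combined with the bijective description $H_\varphi \cong W$ supplied by Theorem \ref{th:homeomor}. The one minor point to handle is arguments with no attackers, where the convention $\max \emptyset = 0$ makes both kernels vanish on that coordinate, so the inequality becomes an equality and is trivially preserved.
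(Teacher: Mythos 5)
Your proof is correct and follows essentially the same route as the paper's: both reduce membership in the acceptability degree space to the condition $0 \leq v_i(1+\varphi_i(\overrightarrow{v})) \leq 1$ and then use the fact that the max of non-negative reals is bounded by their sum. You are somewhat more explicit than the paper in justifying the characterisation of $H_\varphi$ via $k_\varphi$ and Theorem \ref{th:homeomor}, which the paper invokes only implicitly with ``by definition,'' but the substance is identical.
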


\begin{proof}
Let us consider an arbitrary acceptability degree vector $\overrightarrow{HC}_\infty = (v_1, \dots, v_n)$ such that $v_i \in [0,1]$ for all $1 \leq i \leq n$. We show that this vector is in the acceptability degree space for $\sigma_{MB}$. By definition, we know that for all $i \in \{1, \dots, n \}, 0 \leq v_i (1+ \sum_{a_j \in \Att(a_i)}v_j) \leq 1$. It follows that for all $i \in \{1, \dots, n \}, 0 \leq v_i( 1 + \max_{a_j \in \Att(a_i)} v_j) \leq 1$ and thus, $\overrightarrow{HC}_\infty$ is in the acceptability degree space for $\sigma_{MB}$.
\end{proof}

\begin{figure}
    \centering
    \includegraphics[width=5cm]{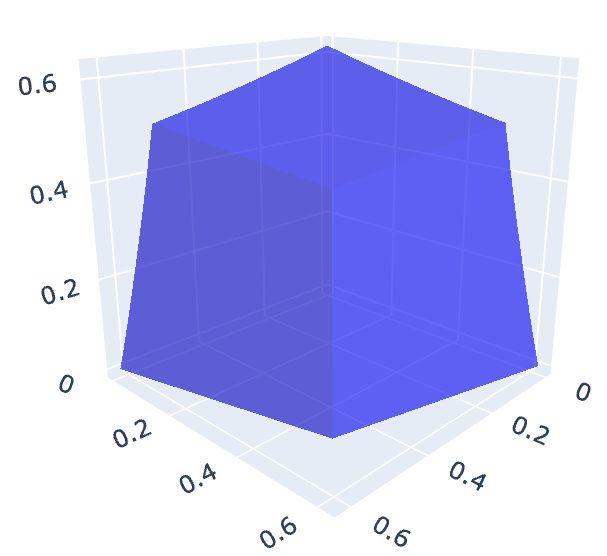}
    \caption{Representation of the acceptability degree space for $\sigma_{HC}$ on a complete argumentation graph with 3 arguments (in blue). Each axis represents the acceptability degree of one argument.}
    \label{fig:hcat_degree_space}
\end{figure}

The final property we consider is monotonicity, which --- together with continuity --- is required for the bisection method of \cite{oren2022inverse} to operate. Due to space constraints, we only prove this result for the weighted h-categoriser semantics. This means that we fix our argumentation kernel function of Section \ref{sec:kernel} to be $\varphi_i(x)=\|(\Adj x)_i\|_1=\sum a_{i,j}x_j$.

We will equip $\mathbb{R}^n$ with the $L^\infty$ norm.
Recall that there is an induced {\em operator norm} \cite{Alt} on the set of all linear transformations of $\RR^n$ which we denote as $\OP{End}(\RR^n) \cong \OP{Mat}_{n \times n}(\RR)$ 
where for any matrix $A$ we set this operator norm as 
\begin{align*}
\| A\| &= \sup \{ \frac{\|Ax\|_\infty}{\| x\|_\infty} : 0 \neq x \in \RR^n \} 
\\
& =
\max \{ \|Ax\|_\infty : x \in \RR^n, \|x\|_\infty=1\}  \\
& =
\max \{ \frac{\|Ax\|_\infty}{\| x\|_\infty} : 0 \neq x \in \RR^n\}.
\end{align*}
It is well known (and easy to check) that this defines a norm on $\OP{End(\RR^n)}$.
Since all norms on $\RR^k$ are equivalent, the operator norm makes $\OP{End}(\RR^n)$ a complete normed space.
By definition for any $x \in \RR^n$
\[
\| Ax\|_\infty \leq \| A\| \cdot \|x\|_\infty
\]
Moreover, the operator norm is multiplicative, namely
\[
\| AB\| \leq \|A \| \cdot \|B \|
\]
This is a standard construction, see for example \cite[Section 5.2 and Theorem 5.3 and Remark 5.4(1)]{Alt}.

\begin{lemma}\label{L:positive determinant zero diagonal}
Consider a matrix $A \in \OP{Mat}_{n \times n}(\RR)$ such that for every $i=1,\dots n$
\begin{enumerate}[label=(\alph*),leftmargin=*]
\item $a_{i,i}=0$ 

\item $\sum_j |a_{i,j}| <1$.
\end{enumerate}
Then $\det(I+A)>0$.
\end{lemma}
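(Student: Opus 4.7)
The plan is to combine the operator-norm bound coming from condition (b) with a continuous-deformation argument on the determinant, both of which fit naturally into the normed-space framework set up just above the lemma.

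First, I would invoke the standard formula for the $L^\infty$-induced operator norm, $\|A\| = \max_i \sum_j |a_{i,j}|$. Condition (b) then gives $\|A\| < 1$, and since the operator norm is absolutely homogeneous we have $\|tA\| = t\,\|A\| < 1$ for every $t \in [0,1]$.

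Second, I would show that $I + tA$ is invertible for every $t \in [0,1]$. If $(I + tA)x = 0$ for some $x \in \RR^n$, then $x = -tAx$, so $\|x\|_\infty \leq \|tA\| \cdot \|x\|_\infty$; since $\|tA\| < 1$ this forces $x = 0$. Therefore $\det(I + tA) \neq 0$ throughout $[0,1]$. Finally, the map $f \colon [0,1] \to \RR$ defined by $f(t) = \det(I + tA)$ is a polynomial in $t$, hence continuous, and by the previous step it never vanishes. Since $f(0) = \det(I) = 1 > 0$, the intermediate value theorem gives $f(1) = \det(I + A) > 0$, as required.

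The only mildly delicate step is the identification of the $L^\infty$ operator norm with the maximum row $\ell^1$-sum: one direction is the triangle inequality, and the other is realized by the specific vector whose entries are $\operatorname{sgn}(a_{i_0,j})$ on the row $i_0$ achieving the maximum. I note that condition (a), $a_{i,i}=0$, is not actually essential for this argument---bound (b) alone implies $\|A\| < 1$ and hence the deformation works. A slightly different route, via the Gershgorin circle theorem, would place all eigenvalues of $I+A$ inside a disk of radius $<1$ centered at $1$, so that they have positive real part, occur in complex conjugate pairs, and their product (the determinant) is positive; this alternative uses (a) more visibly, but the deformation argument above is more elementary and reuses the operator-norm setup already developed just above the lemma.
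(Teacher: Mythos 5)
Your proposal is correct and follows essentially the same route as the paper: bound the $L^\infty$ operator norm by the maximal row sum to get $\|A\|<1$, deduce invertibility of $I+tA$ along the segment $t\in[0,1]$, and conclude via continuity of the determinant and the intermediate value theorem from $\det(I)=1>0$. Your observation that condition (a) is not actually needed for this argument is also accurate --- the paper's proof never uses it either; it is only relevant to the subsequent corollaries.
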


\begin{proof}
First we show that any matrix $A$ with these properties is invertible.
For any $x \in \RR^n$ with $\|x \|_\infty =1$ we get
\begin{align*}
\|Ax\|_\infty 
& = \max_i | Ax_i| 
= \max_i | \sum_j a_{i,j} x_j| \\
& \leq \max_i \sum_j |a_{i,j}| \cdot |x_j| 
\leq \max_i \sum_j |a_{i,j}| 
\end{align*}
It follows that $\|A\| \leq \max_i \sum_j |a_{i,j}|  <1$.
Since $\OP{End}(\RR^n)$ with the operators norm is complete, $B=\sum_{k=0}^\infty (-1)^k A^k$ converges and $B(I+A)=I$.
It follows that $I+A$ is invertible.

Let $U \subseteq \OP{Mat}_{n \times n}(\RR)$ be the (open) subset of all matrices $A$ satisfying the conditions in the lemma.
Clearly $0 \in U$.
Moreover $U$ is path connected since for any $A \in U$ the function $\lambda \colon [0,1] \xto{t \mapsto tA} \OP{Mat}_{n \times n}(\RR)$ gives a path in $U$ from $0$ to $A$.
The function $f \colon U \to \RR$ defined by $f(A)=\det(I+A)$ is clearly continuous.
Since $I+A$ is invertible, $f(A) \neq 0$ for all $A \in U$.
Since $f(0)=1>0$, the intermediate value theorem implies that $f(A)>0$ for all $A \in U$.
\end{proof}

\begin{cor}\label{C:det positive diagonal exceeds row sums}
Let $M \in \OP{Mat}_{n \times n}(\RR)$ be a matrix which satisfies the conditions
\begin{enumerate}[leftmargin=*]
\item $m_{i,j} \geq 0$ for all $i,j$
\item $m_{i,i} > \sum_{j \neq i} m_{i,j}$ for every $i=1,\dots,n$.
\end{enumerate}
Then $\det(M)>0$.
\end{cor}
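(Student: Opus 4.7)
The plan is to reduce the corollary directly to Lemma \ref{L:positive determinant zero diagonal} by factoring out the diagonal of $M$. First I observe that condition (2), together with $m_{i,j} \geq 0$, forces $m_{i,i} > \sum_{j \neq i} m_{i,j} \geq 0$, so every diagonal entry is strictly positive. Hence the diagonal matrix $D = \diag(m_{1,1},\dots,m_{n,n})$ is invertible with $\det(D) = m_{1,1}\cdots m_{n,n} > 0$.

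Next I would consider the matrix $N := D^{-1} M$. Its entries are $n_{i,j} = m_{i,j}/m_{i,i}$, so its diagonal entries are all equal to $1$ and its off-diagonal entries satisfy $n_{i,j} \geq 0$. Writing $N = I + A$ with $a_{i,i} = 0$ and $a_{i,j} = m_{i,j}/m_{i,i}$ for $i \neq j$, condition (2) translates precisely into
\[
\sum_j |a_{i,j}| = \sum_{j \neq i} \frac{m_{i,j}}{m_{i,i}} < 1,
\]
so $A$ satisfies hypotheses (a) and (b) of Lemma \ref{L:positive determinant zero diagonal}.

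Applying the lemma gives $\det(I + A) > 0$. Using multiplicativity of the determinant,
\[
\det(M) = \det(D)\cdot \det(D^{-1} M) = \det(D)\cdot \det(I + A) > 0,
\]
which is the desired conclusion. There is no real obstacle here; the only thing to be careful about is ensuring the strict positivity of the diagonal entries before passing to $D^{-1}M$, and observing that the strict inequality in condition (2) becomes exactly the strict inequality $\sum_j |a_{i,j}| < 1$ needed to invoke the lemma.
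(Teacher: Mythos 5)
Your proof is correct and follows essentially the same route as the paper: factor out the diagonal as $D^{-1}M = I+A$, check that $A$ satisfies the hypotheses of Lemma \ref{L:positive determinant zero diagonal}, and conclude via $\det(M)=\det(I+A)\cdot\prod_i m_{i,i}>0$. You simply spell out the verification of the lemma's hypotheses in more detail than the paper does.
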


\begin{proof}
The conditions imply $m_{i,i}>0$ for all $i=1,\dots,n$.
By inspection 
\[
\OP{diag}(m_{1,1}^{-1},\dots,m_{n,n}^{-1}) \cdot M = I +A
\] 
where $A$ is of the form in Lemma \ref{L:positive determinant zero diagonal}.
Then $\det(M)=\det(I+A) \cdot \prod_i m_{i,i} >0$. 
\end{proof}

\begin{figure*}[h!]
\[
M = \begin{bmatrix}
1+ a_{1,1} x_1 + \sum_{j} a_{1,j} x_j    &  a_{1,2}x_1      & a_{1,3}x_1   & \cdots & a_{1,n}x_1 \\
a_{2,1}x_2  &   1+ a_{2,2} x_2 + \sum_{j} a_{2,j} x_j    &  a_{2,3}x_2      &  \cdots & a_{2,n}x_2 \\
\vdots     &                                               & \ddots         &    \vdots \\
a_{n,1}x_n  &   a_{n,2} x_n   & a_{n,3}x_n                                  & \cdots         & 1+ a_{n,n} x_n + \sum_{j} a_{n,j} x_j
\end{bmatrix}
\]
    \caption{The matrix of the $M$ with entries $\tfrac{\partial h_i}{\partial x_\ell}$}
    \label{fig:matrix}
\end{figure*}

\begin{cor}\label{C:det positive an positive diagonal}
Let $A$ be an $n \times n$ matrix with $0 \leq a_{i,j} \leq 1$.
Let $x \in \RR^n$ be a column vector with $0 \leq x_i \leq 1$.
Let $\delta_1,\dots,\delta_n>0$.
Let $M$ be the matrix given by
\[
m_{i,j} = \left\{
\begin{array}{ll}
\delta_i + \sum_k a_{i,k}x_k & \text{if $i=j$} \\
a_{i,j}x_i                   & \text{if $i \neq j$}
\end{array}
\right.
\]
Then $\det(M)>0$ and moreover the diagonal entries of $M^{-1}$ are positive, i.e $(M^{-1})_{i,i}>0$ for all $i$.
\end{cor}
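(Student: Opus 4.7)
The plan is to prove both claims by induction on $n$, the base case $n = 1$ being immediate since $M = (\delta_1 + a_{1,1}x_1) > 0$. For the inductive step, the argument splits into two cases depending on whether $x$ has any zero coordinate.

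If every $x_i > 0$, the key trick is to right-multiply by $\OP{diag}(x)$. Setting $M' = M \cdot \OP{diag}(x)$, so that $M'_{i,j} = M_{i,j}x_j$, all entries of $M'$ are non-negative, and the row-dominance quantity telescopes:
\[
M'_{i,i} - \sum_{j \neq i} M'_{i,j} = \delta_i x_i + x_i \sum_k a_{i,k}x_k - x_i \sum_{j \neq i} a_{i,j}x_j = \delta_i x_i + a_{i,i}x_i^2 > 0.
\]
Hence Corollary~\ref{C:det positive diagonal exceeds row sums} applies to $M'$, giving $\det M' > 0$, and dividing by $\prod_i x_i > 0$ yields $\det M > 0$.

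If instead some $x_i = 0$, then every off-diagonal entry of row $i$ of $M$ vanishes while $M_{i,i} = \delta_i + \sum_k a_{i,k}x_k > 0$, so cofactor expansion along row $i$ gives $\det M = M_{i,i}\,\det M^{(i,i)}$, where $M^{(i,i)}$ is the principal minor obtained by deleting row and column $i$. The crucial observation, which is also used for the second assertion, is that for \emph{any} $i$ the minor $M^{(i,i)}$ is of exactly the form required by the corollary in dimension $n-1$: its off-diagonal entries are $x_k a_{k,l}$ for $k,l \neq i$, and its diagonal entries rewrite as $\delta'_k + \sum_{m \neq i} a_{k,m}x_m$ with $\delta'_k := \delta_k + a_{k,i}x_i > 0$. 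The inductive hypothesis therefore gives $\det M^{(i,i)} > 0$, and so $\det M > 0$. The second assertion now follows from the cofactor formula $(M^{-1})_{i,i} = \det M^{(i,i)} / \det M$, whose numerator is positive by the same inductive observation and whose denominator is positive by what we have just shown.

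The main obstacle is that $M$ itself is in general not diagonally dominant (for instance when $x_i$ is small relative to $\sum_j a_{i,j}$), so Corollary~\ref{C:det positive diagonal exceeds row sums} does not apply to $M$ directly. The substance of the argument is the pre-conditioning by $\OP{diag}(x)$, which creates the cancellation that reduces the dominance check to the trivially positive quantity $\delta_i x_i + a_{i,i}x_i^2$; this trick requires every $x_i > 0$, which is precisely why the induction is organised around the support of $x$, with the zero-coordinate case absorbed by cofactor expansion into a strictly smaller instance of the same statement.
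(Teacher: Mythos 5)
Your proof is correct, and for the determinant claim it takes a genuinely different route from the paper's, even though both rest on the same key lemma (Corollary~\ref{C:det positive diagonal exceeds row sums}). You reduce $M$ to a diagonally dominant matrix by right-multiplying by $\OP{diag}(x)$, which forces the case split on whether some $x_i=0$ and an induction on $n$ to absorb the degenerate case via cofactor expansion. The paper instead compares $M$ with the matrix $M'$ having the same diagonal and off-diagonal entries $m'_{i,j}=a_{i,j}x_j$ (in effect the conjugate $\OP{diag}(x)^{-1}M\,\OP{diag}(x)$, but realised without dividing by $x$): it checks that $M'$ is diagonally dominant directly, and then proves $\det(M)=\det(M')$ term by term in the Leibniz expansion, using that each permutation restricts to a permutation of its support so that $\prod_{i\in\OP{supp}(\sigma)}x_i=\prod_{i\in\OP{supp}(\sigma)}x_{\sigma(i)}$. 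That identity is insensitive to zero coordinates of $x$, so the paper needs no case analysis and no induction for the first claim; your version buys a more transparent "pre-conditioning" computation ($\delta_i x_i+a_{i,i}x_i^2>0$) at the price of the extra bookkeeping. For the second claim (positivity of the diagonal of $M^{-1}$) the two arguments coincide: Cramer's rule plus the observation that the principal minor $M[i,i]$ is again of the stated form with $\delta'_k=\delta_k+a_{k,i}x_i>0$.
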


\begin{proof}
First, we show that all matrices $M$ of this form have $\det(M)>0$.
Let $M'$ be the matrix with $m'_{i,i}=m_{i,i}$ and $m'_{i,j}=a_{i,j}x_j$.
Observe that $m'_{i,j} \geq 0$ and that for any $i$
\[
m'_{i,i} = \delta_i + \sum_j a_{i,j}x_j  > \sum_{j \neq i} a_{i,j}x_j = \sum_{j \neq i} m'_{i,j}.
\]
By Corollary \ref{C:det positive diagonal exceeds row sums} $\det(M')>0$.

Let $S_n$ denote a permutation of $n$ symbols.
 For any $\sigma \in S_n$, let $\OP{fix}(\sigma)$ denote those symbols which the permutation does not shift (i.e., the fixed points of $\sigma$) and $\OP{supp}(\sigma)$ those symbols which are permuted (i.e., its support). Clearly,  $\sigma$ induces a permutation of $\OP{supp}(\sigma)$.
Therefore 
\begin{align*}
\prod_{i \in \OP{supp}(\sigma)} m_{i,\sigma(i)} & =
\prod_{i \in \OP{supp}(\sigma)} a_{i,\sigma(i)} x_i \\
& =
\prod_{i \in \OP{supp}(\sigma)} a_{i,\sigma(i)} \cdot \prod_{i \in \OP{supp}(\sigma)} x_i 
\\
& =
\prod_{i \in \OP{supp}(\sigma)} a_{i,\sigma(i)} \cdot \prod_{i \in \OP{supp}(\sigma)} x_{\sigma(i)} \\
& =
\prod_{i \in \OP{supp}(\sigma)} a_{i,\sigma(i)} x_{\sigma(i)} =
\prod_{i \in \OP{supp}(\sigma)} m'_{i,\sigma(i)}.
\end{align*}
We can now compute
\begin{align*}
\det(M) &= 
\sum_{\sigma \in S_n} (-1)^\sigma \prod_i m_{i,\sigma(i)} \\
& = \sum_{\sigma \in S_n} (-1)^\sigma \prod_{i \in \OP{fix}(\sigma)} m_{i,i} \cdot \prod_{i \in \OP{supp}(\sigma)}  m_{i,\sigma(i)} 
\\
&= \sum_{\sigma \in S_n} (-1)^\sigma \prod_{i \in \OP{fix}(\sigma)} m'_{i,i} \cdot \prod_{i \in \OP{supp}(\sigma)}  m'_{i,\sigma(i)} \\
& = \sum_{\sigma \in S_n} (-1)^\sigma \prod_i m'_{i,\sigma(i)}  \\
& = \det(M') >0.
\end{align*}
We have shown that all matrices $M$ of the form in the statement (for any $n$) have positive determinant and in particular they are invertible.
Let $M[i,j]$ denote the $(i,j)$-minor of $M$, namely the $(n-1) \times (n-1)$ matrix formed by deleting the $i$th row and $j$th column of $M$.
A consequence of Cramer's rule
\[
(M^{-1})_{i,i} = \frac{\det(M[i,i])}{\det(M)}.
\]
Notice that $M[i,i]$ is a matrix of the form in the statement (with $A$ replaced with $A'=A[i,i]$ and $x$ replaced with $x'=(x_1,\dots,\widehat{x_i},\dots,x_n)$ and $\delta_i$ replaced with some $\delta_i'>\delta_i$).
So $\det(M[i,i])>0$ and as a consequence $(M^{-1})_{i,i}>0$.
\end{proof}

Let $\Adj$ denote the adjacency matrix of a directed WAF $\AF$. We can now apply these results to the argumentation kernel functions for $\sigma_{HC}$.

\begin{theorem}
Let $h_\varphi \colon W \to H_\varphi$ and $k_\varphi \colon H_\varphi \to W$ be associated with the argumentation kernel functions $\varphi_i(x)=\|(\Adj x)_i\|_1=\sum a_{i,j}x_j$.
Then $h_\varphi$ and $k_\varphi$ are $C^\infty$ (differentiable infinitely many often) and moreover, for every $w \in W$ and every $i$
\[
\frac{\partial h_i}{\partial w_i}(w) >0.
\]
In particular $h_\varphi$ is increasing in each fibre, i.e if $w,w' \in W$ are such that $w'-w=(0,\dots,0,\epsilon_i,0,\dots,0)$ for some $\epsilon_i>0$ then $h_\varphi(w) < h_\varphi(w')$.
\end{theorem}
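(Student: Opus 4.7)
The plan is to apply the inverse function theorem to the explicit polynomial map $k_\varphi$, whose inverse is $h_\varphi$ by Theorem~\ref{th:homeomor}, and then read off the sign of $\partial h_i / \partial w_i$ from the diagonal entries of the inverse Jacobian via Corollary~\ref{C:det positive an positive diagonal}.

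First, I would compute the Jacobian of $k_\varphi$ explicitly. For the $L^1$-kernel $\varphi_i(x)=\sum_j a_{i,j}x_j$, we have $k_\varphi(x)_i = x_i + x_i\sum_j a_{i,j}x_j$, which is a polynomial in $x$ (hence $C^\infty$ on all of $\RR^n$), and a direct product-rule computation gives
\[
\frac{\partial (k_\varphi)_i}{\partial x_\ell}(x)
= \delta_{i\ell}\Bigl(1 + \sum_j a_{i,j}x_j\Bigr) + a_{i,\ell}\,x_i.
\]
One then observes that this is precisely the matrix $M$ displayed in Figure~\ref{fig:matrix}, with $\delta_i=1$ for every $i$. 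Since $0\leq a_{i,j}\leq 1$ and $x\in X=[0,1]^n$, Corollary~\ref{C:det positive an positive diagonal} applies directly and yields $\det(Dk_\varphi(x))>0$ together with $(Dk_\varphi(x))^{-1}_{i,i}>0$ for every $i$ and every $x\in X$.

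Second, I would invoke the inverse function theorem. Theorem~\ref{th:homeomor} already provides that $k_\varphi\colon H_\varphi \to W$ is a global homeomorphism with inverse $h_\varphi$; combined with the pointwise invertibility of $Dk_\varphi$ just established, the inverse function theorem upgrades this to $C^\infty$-smoothness of $h_\varphi$ and yields the formula $Dh_\varphi(w)=\bigl(Dk_\varphi(h_\varphi(w))\bigr)^{-1}$. Reading off the $(i,i)$-entry gives $\partial h_i/\partial w_i(w) = \bigl(Dk_\varphi(h_\varphi(w))\bigr)^{-1}_{i,i} >0$. For the ``in particular'' claim, I would apply the fundamental theorem of calculus along the segment from $w$ to $w'=w+\epsilon_i e_i$: the $i$th coordinate satisfies
\[
h_i(w')-h_i(w) = \int_0^1 \frac{\partial h_i}{\partial w_i}(w+t\epsilon_i e_i)\,\epsilon_i\,dt > 0,
\]
yielding the required strict increase in the $i$th coordinate and hence $h_\varphi(w) < h_\varphi(w')$.

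The main subtlety I anticipate is that $W=[0,1]^n$ is closed rather than open, so some care is needed to justify differentiability at the boundary. I would handle this by extending $k_\varphi$ to the polynomial map it naturally is on all of $\RR^n$: the hypotheses $a_{i,j}\geq 0$, $x_i \geq 0$, $\delta_i=1>0$ of Corollary~\ref{C:det positive an positive diagonal} are stable under small perturbations, so $Dk_\varphi$ remains invertible on an open neighborhood of $X$ in $\RR^n$. Local inversion then produces $C^\infty$ inverses near each point of $H_\varphi$, and these necessarily agree with the global continuous inverse $h_\varphi$ supplied by Theorem~\ref{th:homeomor}, patching together to give a $C^\infty$ extension on a neighborhood of $W$ from which the smoothness statement on $W$ itself follows.
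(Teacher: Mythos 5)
Your proposal is correct and follows essentially the same route as the paper's own proof: compute the Jacobian of $k_\varphi$, match it to the matrix form of Corollary~\ref{C:det positive an positive diagonal} to get $\det>0$ and positive diagonal entries of the inverse, then apply the inverse function theorem and identify the local inverse with $h_\varphi$. The only (harmless) difference is that you take $\delta_i=1$, whereas for self-attacking arguments the diagonal entry actually corresponds to $\delta_i=1+a_{i,i}x_i>0$, which still satisfies the corollary's hypotheses; your added FTC and boundary-extension details merely make explicit what the paper leaves implicit.
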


\begin{proof}

The function $k_\varphi$ is the restriction to $H_\varphi$ of the $C^\infty$ function $k_\varphi \colon \RR^n \to \RR^n$
\[
k_\varphi(x)_i = x_i(1+\sum_j a_{i,j} x_j).
\]
For any $\ell=1,\dots,n$, we get
\[
\frac{\partial k_i}{\partial x_\ell} = 
\delta_{i,\ell}(1+\sum_j a_{i,j}x_j) + x_i a_{i,\ell}.
\]
The derivative of $k_\varphi$ at $x \in \RR^n$ is the matrix $M$ with entries $\tfrac{\partial h_i}{\partial x_\ell}$ so 
\[
M_{i,\ell} = \left\{
\begin{array}{ll}
1+ \sum_j a_{i,j} x_j + a_{i,\ell}x_i & \text{if $i=\ell$} \\
a_{i,\ell} x_i  & \text{if $i \neq \ell$}.
\end{array}\right.
\]
In matrix form this can be written as $
M = I + \diag(\Adj x) + \diag(x) \Adj$, which --- by inspection --- takes the form of the matrix shown in Figure \ref{fig:matrix}.

Thus, $M$ has the form in Corollary \ref{C:det positive an positive diagonal} and we deduce that $\det(M)>0$ and the diagonal of $M^{-1}$ has positive entries.
In particular $M$ is invertible.
By the inverse function theorem $k$ is invertible in a neighbourhood of $x$ with an inverse $k_\varphi^{-1}$ defined in a neighbourhood $U$ of $y=k_\varphi(x)$ and the derivative of $k_\varphi^{-1}$ at $y$ is equal to $M^{-1}$.

Consider some $w \in W$ and set $x = h_\varphi(w)$.
We have seen that the matrix $M$ of the derivative of $k_\varphi$ at $x$ is invertible and $M^{-1}$ has positive diagonal entries.
Also $k_\varphi$ is invertible at a neighbourhood of $x$ with inverse $k_\varphi^{-1}$ defined in a neighbourhood $U$ of $k_\varphi(x)=w$ and with derivative $M^{-1}$ at $w$.
But $h_\varphi$ is the inverse of $k_\varphi|_{H_\varphi}$ so $h_\varphi$ must coincide with $k_\varphi^{-1}$ on $U \cap W$ and in particular $h_\varphi$ is $C^\infty$ at $w$ and its derivative is given by $M^{-1}$ whose diagonal entries are positive.
\end{proof}

\section{Conclusions and Future Work}\label{sec:conclusion}

Like \cite{oren2022inverse}, this paper primarily examines the inverse problem in gradual argumentation semantics. It proves several conjectures made in that work, namely demonstrating that the $HC, MB$ and $CB$ semantics are continuous, and that the $HC$ semantics are strongly monotonic. In addition, we prove that a more general class of gradual semantics is continuous, and has a unique fixed point, than done in work such as \cite{Pu14,AMGOUD2022103607}. Using these properties we demonstrate that --- for any semantics which can be represented using an argumentation kernel --- we can compute the initial weights analytically, obviating the need for the numerical method described in \cite{oren2022inverse}.

We are pursuing several avenues of future work. First, we want to investigate whether our monotonicity results can be extended to more general argumentation kernels, as well as investigating which existing weighted and ranking semantics (e.g., \cite{dunne_weighted_2011,rago_discontinuity-free_2016,pereira_changing_2011}) can be represented using such argumentation kernels, and whether there is an overlap between the argumentation kernel approach and the properties described in \cite{AMGOUD2022103607,bonzon_comparative_2016}. Second, we would like to further investigate the utility of the class of semantics described by argumentation kernels, and identify whether useful semantics exist which cannot be described by such kernels but which still respect monotonicity and continuity. If such semantics are found, then the we wish to determine whether an analytical solution to the inverse problem for such semantics can be found, or whether a numerical method for solving the inverse problem in such cases is applicable. In addition, characterising the shape of the applicability degree in terms of a semantics and initial weights may yield interesting insights. We want to examine how our work can be applied to dynamic argumentation and sensitivity analysis of arguments. In effect, our work is an initial step to asking by how much an argument must change in order to change some conclusion. However, it does not (as yet) provide a lower bound to this change, but rather only a sufficient bound.  Finally, we observe that the use of argumentation kernel functions have the potential to allow different arguments to compute a final acceptability degree in different ways. This could allow for the modelling of --- for example --- different acceptability degrees arising from different types of arguments, c.f., argumentation schemes, and serves as an exciting avenue of future research.

\bibliographystyle{unsrt}
\bibliography{arxiv.bib}

\end{document}